\newtheorem{Lemma}{Lemma}
\newcommand{\inr}[1]{\bigl< #1 \bigr>}
\newcommand{\norm}[1]{\left\|#1\right\|}%
\newcommand{\tnorm}[1]{{\left\vert\kern-0.25ex\left\vert\kern-0.25ex\left\vert #1 
    \right\vert\kern-0.25ex\right\vert\kern-0.25ex\right\vert}}
\newcommand\eps{\epsilon}
\DeclareMathOperator*{\argmin}{argmin}
\def\ds1{\textrm{1\kern-0.25emI}} 
\newcommand \E{\mathbb{E}}
\newcommand \R{\mathbb{R}}
\newcommand \cA{{\cal A}}
\newcommand \cB{{\cal B}}
\newcommand \cE{{\cal E}}
\newcommand \cF{{\cal F}}
\newcommand \cG{{\cal G}}
\newcommand \cI{{\cal I}}
\newcommand \cN{{\cal N}}
\newcommand \cO{{\cal O}}
\newcommand \cQ{{\cal Q}}
\newcommand \cX{{\cal X}}
\newcommand \cY{{\cal Y}}
\newcommand \bR{{\mathbb R}}
\newcommand{\PP}{\mathbb{P}}
\DeclareMathOperator*{\VC}{VC}
\DeclareMathOperator*{\Tr}{Tr}
\newcommand{\Ba}{B_\Sigma}
\newtheorem{coro}{Corollary}
\newtheorem{setting}{Setting}
\newtheorem{theo}{Theorem}
\newtheorem{lemma}{Lemma}
\newtheorem*{rem}{Remark}
\DeclareMathOperator*{\polylog}{polylog}
\begin{document}

\title{ A spectral algorithm for robust regression with subgaussian rates}
\author[1]{Jules Depersin \\ email: \href{mailto:jules.depersin@ensae.fr}{jules.depersin@ensae.fr} \\ CREST, ENSAE, IPParis. 5, avenue Henry Le Chatelier, 91120 Palaiseau, France.}

\date{}                     
\setcounter{Maxaffil}{0}
\renewcommand\Affilfont{\itshape\small}

\maketitle

\begin{abstract}

We study a new linear up to quadratic time algorithm for linear regression in the absence of strong assumptions on the underlying distributions of samples, and in the presence of outliers.  The goal is to design a procedure which comes with actual working code that attains the optimal sub-gaussian error bound even though the data have only finite moments (up to $L_4$) and in the presence of possibly adversarial outliers. A polynomial-time solution to this problem has been recently discovered \cite{10.1145/3357713.3384329} but has high runtime due to its use of Sum-of-Square hierarchy programming. At the core of our algorithm is an adaptation of the spectral method introduced in \cite{lei2019fast} for the mean estimation problem to the linear regression problem. As a by-product we established  a connection between the linear regression problem and the furthest hyperplane problem. From a stochastic point of view, in addition to the study of the classical quadratic and multiplier processes we introduce a third empirical process that comes naturally in the study of the statistical properties of the algorithm. We provide an analysis of this latter process using results from \cite{depersin2020robust}. 
\end{abstract}

\noindent\textbf{AMS subject classification:} 	62F35\\
\textbf{Keywords:} Robustness, heavy-tailed data, regression.

\section{Introduction} 
\label{sec:introduction_on_the_mean_vector_problem}

Much work concerning the prototypical problem of regression focuses on the study of rates of error of a given statistical procedure while making strong assumptions on the underlying distributions of samples, assuming for instance that they are i.i.d. and subgaussian or bounded (see for instance, \cite{Kol11,MR2319879,lecue2013learning}). It is however of fundamental importance to understand what happens when the data violates such strong assumptions, for instance, when the underlying distribution of samples is \emph{heavy-tailed} and/or when the dataset is corrupted by outliers. In such cases -- which are everyday cases for real-world datasets -- classical estimators such as OLS or MLE  exhibit, at best, far-from-optimal statistical behaviours and at worst completely non-sens outputs. In this work, we study the statistical properties (non-asymptotic estimations and predictions results) of algorithms coming with actual working code constructed on this type of real-word datasets. We want to put forward that it is an algorithm and not only a purely theoretical estimator and that this algorithm can be coded efficiently (we provide a simulation study in the following) since its most time consuming fundamental building block is to find a top singular vector of a reasonable size matrix. However, our theoretical results show that even though the dataset is far from the ideal i.i.d. subgaussian framework and even though we study an actually codable algorithm, the resulting estimator achieves the very same minimax bounds with (exponentially) high probability as the MLE/OLS does in the ideal i.i.d. Gaussian framework (i.e. Gaussian design and independent Gaussian noise), (see \cite{lecue2013learning} for deviation optimal result in the ideal framework). On top of that, we prove a theoretical running time for that algorithm which can be linear $\cO(Nd)$ (where $N$ is the sample size and $d$ is the number of features) and at most quadratic $\cO(N
^2d)$.\\

Robustness has been a classical topic in statistics since the work of Hampel \cite{MR0301858,MR0359096}, Huber(\cite{MR2488795, MR0161415}) and Tukey \cite{MR0133937}. For a statistical problem such as mean estimation, regression or covariance estimation, we are given a loss function and an associated risk function $\ell$ (for instance, for the problem of estimation of the mean vector  $\mu^*:=\E(X)\in\bR
^d$, the loss function is $\ell_\mu(X) =\norm{\mu -X}_2^2, \forall \mu\in\bR^d$ and the associated risk if $\ell(\mu)=\E \ell_\mu(X) $).  For robust estimators the emphasis is not put on the expected risk $\E(\ell(\hat \mu))$ -- where the expectation is taken w.r.t. the data -- but rather on the dependence of the risk bound $r_\delta$ on the confidence level $1-\delta \in [0,1]$: we want to find the smallest $r_\delta$ so that $ \PP(\ell(\hat \mu)> r_\delta)\leq \delta$ and the way $r_\delta$ depends on $\delta$ is paramount in this approach (this is a key property of the estimator $\hat\mu$ that cannot be revealed when its expected risk is studied). An estimator is robust if the rate $r_\delta$ does not grow "too quickly" when $\delta$ goes to $0$: we look for an optimal dependence called "subgaussian rate", because it is the dependency that we would get if all the data were sub-gaussian. It has been known that for the problem of estimating the mean in one dimension ($d=1$) under the only assumptions that the random variables have bounded variance $\sigma$, there are estimators which achieve rates whose dependence on $\delta$  is way better than the empirical mean \cite{AIHPB_2012__48_4_1148_0}. Indeed, while the empirical mean cannot achieve in general a better rate than $\sigma/\sqrt{ N \delta} $ ($N$ being the number of sample), the median of means estimator for instance achieves \emph{the same rate as the empirical mean does in the Gaussian setting} $ \sigma \sqrt{\log(1/\delta)/N}$ (see \cite{AIHPB_2012__48_4_1148_0,MR3576558}). Achieving similar guarantees for large dimensions $d$ is much more difficult, even without asking for computationally-tractable algorithms. However, a number of estimators did succeed, in the last decade, to match the rates achievable in the Gaussian case by usual approaches (sometimes called "subgaussian rates") with much weaker assumptions, even in high dimensions, for problems such as regression or mean estimation (see \cite{lugosi2019sub,LMSL} or \cite{lugosi2019mean} for a survey). Here we consider the standard linear regression setting where data are couples $(X_i, Y_i)_i \in \R^d \times \R$ and we look for the best linear combination of the coordinates of an input vector $X$ to predict the output $Y$, that is we look for $\beta^*$ defined as follows.
$$\beta^*=\argmin_{\beta \in \R^d}\ell(\beta)=\argmin_{\beta \in \R^d} \E(Y_1-\braket{\beta , X_1})^2 .$$ 

The theoretical question of finding robust to heavy-tailed estimators reaching optimal rates for the regression problem has attracted much attention during the last ten years. It first started with the study the standard procedures in this heavy-tailed framework, such as Empirical Risk Minimization or its regularized versions \cite{lecu2016regularization,MR3301300, LeM14,MR3568047}. Several results showed the negative but unavoidable impact of heavy-tailed data on these classical procedures \cite{MR3474824}. In the mean time, new estimators have been introduced.  For instance, the pioneer work of \cite{audibert2011} has considered weak moment conditions, such as a $L_2-L_4$ norm equivalence, under which the subgaussian rate could be reached. It was then followed by a rich literature such as \cite{lugosi2016risk,guillaume2017learning, lecu2017robust, MR3568047, MR3301300}. The remaining issue is that naive methods to compute these new theoretically-optimal estimators take exponential time  in the number of dimension $d$, partly because some of them are based on non-convex optimization. \\

Some recent works (for instance \cite{MR3631028, MR3909639, MR3909640, minsker2015geometric, JMLR:v17:14-273}) focused on providing procedures that were not only robust (to outlier or heavy tailed data), but also computationally efficient. Unfortunately, in \cite{MR3631028, MR3909639, MR3909640}   the procedures fail with constant probability, failing to give a good dependence on $\delta$, and procedures from \cite{minsker2015geometric, JMLR:v17:14-273} do not achieve optimal rates: for instance, for the problem of mean estimation with bounded covariance $\Sigma$, they achieve a bound of order $\propto \sqrt{ \frac{\Tr(\Sigma) \times \log(1/\delta)}{N}}$ (up to constants) when the rates achievable in the Gaussian case by the empirical mean is of order $\sqrt{ \frac{\Tr(\Sigma)+\norm{\Sigma}_{op} \log(1/\delta)}{N}}$. This suggests an important question: are there efficiently computable procedures achieving optimal rates $r_\delta$ under weak assumptions on underlying data, and in the presence of outliers among the data? \\

 This question was recently answered affirmatively for the mean estimation problem. Indeed, recent advances have shown that, for the problem of mean estimation, one could find computationally efficient procedures (that is to say polynomial in both the dimension $d$ and the number of data $N$) that are statistically nearly optimal, meaning that they reach -up to universal constants- the optimal radius $r_\delta=\sqrt{ \frac{\Tr(\Sigma)+\norm{\Sigma}_{op} \log(1/\delta)}{N}}$ for every confidence level $\delta \in [0,1]$ (see \cite{hopkins2018sub, Bartlett19, depersin2019robust}). More recently, \cite{lei2019fast} introduce a spectral method reaching the optimal sub-gaussian rates without using Semi-Definite Programming, making somehow robust mean estimation easier to understand, easier to interpret and easier to code while still keeping optimal statistical results. \\

The question of whether reaching similar bounds (matching the one of the OLS in the Gaussian setting without the Gaussian and i.i.d. assumptions -- thus allowing for corrupted and heavy-tailed datasets) in polynomial time was possible for other statistical problems such as regression or covariance matrix estimation had been open for a long time. Indeed, up to recently, the best known polynomial algorithms were the one from \cite{prasad2018robust} or from \cite{JMLR:v17:14-273}. The guarantee is the same for those two algorithms: when the covariance of $X$ is the identity and when the noise $\xi=Y-\braket{\beta
^*,X}$ has bounded variance  $\ell(\hat f) - \ell(f^*) \leq \cO(\frac{\log(1/\delta) d}{N}) $ with probability $1-\delta$, and they need a number of sample of order $N \gtrsim \log(1/\delta) d $. The article \cite{10.1145/3357713.3384329} has been the first to construct a polynomial-time method achieving the rate of the OLS in the Gaussian setting $\ell(\hat f) - \ell(f^*) \leq \cO(\frac{\log(1/\delta) \vee d}{N}) $. To the date, it is the only procedure running in polynomial algorithm achieving the optimal subgaussian rate. However,  \cite{10.1145/3357713.3384329} uses the Sum of Square (SoS) programming hierarchy  to design their algorithm. Even if SoS hierarchy runs in polynomial time,  its reliance on solving large semi-definite programs makes it impractical and is still a theoretical result leaving still open the question on the existence of a practical efficient algorithm achieving optimal subgaussian rates.   

In this article, we tackle this issue, showing that techniques from \cite{lei2019fast} combined with lemmas from $\cite{depersin2020robust}$ can be used to give the first practical, nearly quadratic (and in fact in most cases nearly-linear) algorithm that reaches the subgaussian rate. We also conduct numerical experiments on simulated data with our proposed procedure to show that it is indeed practical and fast. Moreover, as predicted by our theoretical findings, our simulation analysis shows that it is robust both to heavy-tailed data and to outliers. To the best of our knowledge, this is the first time that numerical experiments are conducted for a regression algorithm with sub-gaussian rates and polynomial time guarantees. \\

From a theoretical point of view, our main result (that we will prove later) can be stated as follows (see Setting~\ref{setting} for the precise set of assumptions and next sections for the construction of the algorithm).

\begin{theo}\label{theo:main1}
There are universal constants $A,B,C$ so that the following hold. Let $\delta \geq e^{-AN}$  and $K \geq B (\lfloor  \log(1/\delta) \rfloor \vee d \vee |\cO|)$ where $|\cO|$ is the number of outliers. Given $N \geq K$ points, there is an algorithm running in time $$\mathcal{O}
\left(( n d + k^2d) \times   \log(||\beta^*||_\Sigma)\times \polylog(k,d) \right)$$ that outputs an estimate $\hat\beta \in \R^d$ such that with probability at least $1-\delta$ 
$$  \ell(\hat \beta)- \ell(\beta^*)\leq C \frac{\sup_{u \in \Ba} \E(\xi_1^2 \braket{u , X_1}^2) K }{N}.$$
\end{theo}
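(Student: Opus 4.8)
The plan is to reduce the excess risk to a prediction error in the $\Sigma$-geometry, to read the algorithm as a robust gradient descent whose only nontrivial ingredient is a spectral robust estimation of block-gradients, and then to control the three empirical processes that govern the accuracy of each descent step.

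First I would use that, for the square loss, the excess risk is an exact quadratic form: since $\beta^*$ satisfies the first-order (orthogonality) condition $\E[\xi_1 X_1]=0$, one has $\ell(\beta)-\ell(\beta^*)=\E\braket{\beta-\beta^*,X_1}^2=\norm{\beta-\beta^*}_\Sigma^2$. It therefore suffices to prove that $\norm{\hat\beta-\beta^*}_\Sigma^2 \leq C\,\sup_{u\in\Ba}\E(\xi_1^2\braket{u,X_1}^2)\,K/N$, i.e. to bound the estimation error measured in the $\Sigma$-norm. Next I would recast the procedure as an iteration $\beta_{t+1}=\beta_t-\eta\,\widehat g(\beta_t)$, where $\widehat g(\beta_t)$ is a robust estimate of the gradient $\nabla\ell(\beta_t)=-2\,\E[(Y_1-\braket{\beta_t,X_1})X_1]$, obtained by partitioning the sample into $K$ blocks, forming the $K$ block-gradients, and applying the spectral robust-mean method of \cite{lei2019fast}. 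The top-singular-vector computation is exactly the search for the direction along which the block-gradients are most dispersed, which is where the announced link with the furthest-hyperplane problem appears and what makes each iteration cost $\cO(Nd+K^2d)$ up to polylog factors; the $\log(\norm{\beta^*}_\Sigma)$ factor in the running time counts the descent steps needed to reach the statistical floor from a trivial start.

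The heart of the proof is to show that, on an event of probability at least $1-\delta$, the robust gradient estimate is accurate uniformly over $\beta$ in a neighbourhood of $\beta^*$. I would obtain this from three block-wise empirical processes: a \emph{quadratic} process ensuring that for most blocks the empirical norm $|B_k|^{-1}\sum_{i\in B_k}\braket{u,X_i}^2$ is two-sided equivalent to $\norm{u}_\Sigma^2$ uniformly in $u$ (controlling the signal part of the gradient); a \emph{multiplier} process controlling $|B_k|^{-1}\sum_{i\in B_k}\xi_i\braket{u,X_i}$, i.e. the fluctuation of the block-gradients around their mean; and a \emph{third}, new process controlling the empirical second moment of the gradient noise $\sup_{u\in\Ba}|B_k|^{-1}\sum_{i\in B_k}\xi_i^2\braket{u,X_i}^2$, whose population counterpart is exactly the quantity $\sup_{u\in\Ba}\E(\xi_1^2\braket{u,X_1}^2)$ that sets the rate. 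Because the spectral step reads off the covariance of the block-gradients, it is this third process that pins down the leading constant. Controlling it under only an $L_4$ moment assumption on $X_1$ and a finite variance on $\xi_1$ is the hard part: this object is genuinely fourth-order in the data, so a naive concentration bound would demand far stronger moments. Here I would invoke the lemmas of \cite{depersin2020robust} to obtain a median-of-means deviation bound for it. Under the hypothesis $K\gtrsim \log(1/\delta)\vee d\vee|\cO|$, at most a minority of blocks can be corrupted, either by an outlier or by a heavy-tailed fluctuation, so the spectral robust-mean step detects and discards the bad blocks and returns a gradient whose error is of order $\sqrt{\sup_{u\in\Ba}\E(\xi_1^2\braket{u,X_1}^2)\,K/N}$ in the $\Sigma$-geometry.

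Finally, feeding this per-step guarantee into the descent, a contraction argument shows that the iterates converge to a point at $\Sigma$-distance $\lesssim \sqrt{\sup_{u\in\Ba}\E(\xi_1^2\braket{u,X_1}^2)\,K/N}$ from $\beta^*$; squaring and combining with the reduction of the first step yields the claimed excess-risk bound with probability at least $1-\delta$. I expect the control of the third empirical process to be the principal technical obstacle, both because of the weak ($L_4$) moment assumptions and because it is the step that must produce the correct, variance-type constant $\sup_{u\in\Ba}\E(\xi_1^2\braket{u,X_1}^2)$ rather than a cruder trace-type bound.
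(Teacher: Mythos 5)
Your overall architecture matches the paper's: reduce the excess risk to $\norm{\hat\beta-\beta^*}_\Sigma^2$, view the block vectors $Z_k(\beta_c)=m^{-1}\sum_{i\in B_k}(Y_i-\braket{\beta_c,X_i})\Sigma^{-1/2}X_i$ as noisy copies of $\Sigma^{1/2}(\beta^*-\beta_c)$, run the spectral furthest-hyperplane descent of \cite{lei2019fast} on them, and prove a per-step contraction down to a floor of order $r=8\sigma\sqrt{K/N}$ using uniform median-of-means events. But you misidentify the third empirical process, and this creates two concrete problems. First, the constant $\sigma^2=\sup_{u\in\Ba}\E(\xi_1^2\braket{u,X_1}^2)$ does \emph{not} enter through a fourth-order process: in the paper it enters through the \emph{multiplier} process alone, via the per-block Chebyshev bound $\PP\bigl(m^{-1}|\sum_{i\in B_1}\xi_i\braket{u,X_i}|\geq r\bigr)\leq \E(\xi_1^2\braket{u,X_1}^2)/(mr^2)\leq 1/32$, which is made uniform over $u$ not by concentrating any fourth-order quantity but combinatorially, through Lemma \ref{main} of \cite{depersin2020robust} combined with a VC bound for sign patterns of quadratic polynomials (Warren's theorem, Corollary \ref{polyVC}). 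This is exactly how the variance-type constant is obtained under only $\sigma^2<\infty$; your proposed control of $\sup_{u\in\Ba}m^{-1}\sum_{i\in B_k}\xi_i^2\braket{u,X_i}^2$ is never needed, and concentrating it around its mean would require fourth moments of $\xi\braket{u,X}$ that Setting \ref{setting} does not grant. Second, the paper's actual third process (Lemma \ref{init}, event $\cA$) is the uniform-in-$\beta_c$ bound $\norm{\tilde Z_k(\beta_c)}_2\leq\sqrt{d}\,(r+\norm{\beta_c-\beta^*}_\Sigma)$ on most blocks; combined with the pruning step it bounds $R=\max_{k\leq K'}\norm{Z'_k}_2$, and this is what makes the multiplicative-weights iteration count $T\gtrsim \log(K')R^2/\theta^2$ of order $d\log K'$ (since the relevant thresholds satisfy $\theta\gtrsim\norm{\beta_c-\beta^*}_\Sigma$ above the floor). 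Your proposal never controls $R/\theta$, so the claimed per-iteration runtime $\cO((Nd+K^2d)\polylog(K,d))$ does not follow from what you prove.

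Two smaller deviations are worth flagging. The algorithm does not form a robust gradient estimate and take a fixed-step gradient step: it returns a unit direction via \texttt{BregmanRegression} and rounding, and a step size found by binary search over $\theta$, exploiting the guaranteed fail/success dichotomy of Lemma \ref{algo} in the two regimes of Lemma \ref{theta}. This is also where uniformity over \emph{all} $\beta_c\in\R^d$ (not merely a neighbourhood of $\beta^*$) is needed, since the events $\cA,\cB,\cE$ must hold at every data-dependent iterate. Finally, the probability budget must also absorb the internal randomness of the power method and of \texttt{ROUND} (failure probability $\propto\exp(-K)/T_{des}$ per iteration), not only the three stochastic events; your write-up folds everything into a single unexplained $1-\delta$ event.
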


So for $K= B ( \lfloor\log(1/\delta) \rfloor \vee d \vee |\cO| )$, we get, up to universal constants the (deviation minimax optimal) subgaussian rate achieved by OLS in the Gaussian framework (see \cite{lecue2013learning}). This rate was achieved previously under similar assumptions by Median-of-means estimators in \cite{lugosi2016risk, lerasle2019lecture, lecu2017robust,guillaume2017learning}  but none of them come with computational time guarantees.

To construct estimator $\hat \beta$ from Theorem~\ref{theo:main1} and to prove its theoretical properties as stated in Theorem~\ref{theo:main1},  we outline now the role of the following key tools:
\begin{itemize}
    \item \textbf{Median of Means \cite{MR702836, MR855970,MR1688610}}: this approach is nowadays widely used in robust estimation (see \cite{lugosi2019sub, lugosi2019mean, lecu2017robust, chinot2018statistical, minsker2015geometric}, and see \cite{MR3576558} for a good introduction to this technique). Let us quickly explain this trick in one dimension. Consider the problem of estimating the mean $\mu$ of a one-dimensional random variable $X$ from corrupted samples, supposing only $\E((X-\mu)
   ^2 ) \leq \sigma^2$. In that case, the empirical mean fails to provide any guarantees in the presence of outliers, and only gives weak bounds (of order $\sigma\sqrt{1/(\delta n)} $ for the confidence level $1-\delta$, \cite{MR3052407}) even when there are no outliers. 
   
   The median of mean method, that can be traced back to \cite{MR702836} for a confidence level $1-\delta$ consists in splitting the data $X_1, ..., X_N$ into $K\sim \log(1/\delta)$ equal-size buckets. For all $1\leq k\leq K$, $\bar{X}_k$ denotes the average of the samples in bucket $k$. Then we let $\mu_\delta$ be the median of $\bar{X}_1,\cdots, \bar{X}_K$. We can show with a straightforward analysis ($\cite{MR3576558, MR702836}$) that $|\mu_{\delta}-\mu| \leq \sigma \sqrt{\log(1/\delta)/N}$ with probability $\geq 1-\delta$ and that this bound still holds in the presence of up to $K/8$ outliers in the data. The main challenge is to extend this idea to higher dimensional settings and to other statistical problems, where we need to design appropriate notions of median.  For instance, \cite{lugosi2016risk, lerasle2019lecture, lecu2017robust,guillaume2017learning,depersin2020robust} introduce  median-of-mean estimators suited for regression but which are intractable in practice. 
   
    \item The \textbf{Furthest hyperplane problem} was first adapted to compute median-of-mean estimators very recently by \cite{lei2019fast}. Authors from \cite{lei2019fast} adapt to the problem of robust mean estimation a procedure initially proposed by \cite{pmlr-v23-karnin12} to find  the approximate furthest hyperplane, that is to say the hyperplane that separate $0$ from \emph{most of the data} and that is the furthest possible from $0$. The method from \cite{pmlr-v23-karnin12} is based on the multiplicative weight update method (see \cite{v008a006} for a survey), a technique which allows to compute efficiently approximations of quantities such as 
    $\inf_{w_i \in \Delta} \sup_u \sum_i w_i \braket{u, x_i }^2$ where $\Delta$ is a convex set of positive weights.
\end{itemize}
The combination of these two techniques is at the heart of both the construction and the statistical and computational time studies of the algorithm satisfying Theorem~\ref{theo:main1}. 

 In section \ref{stochastic} we present the assumptions we make on the data and provide all the stochastic lemmas that will be needed for the algorithm. In section \ref{section:algo} we will present our descent algorithm and give its precise statistical performance. In section \ref{sec:expe} we present some empirical results on simulated data.

\section{Assumptions and preliminary stochastic results}
\label{stochastic}
\subsection{Assumptions} 
As explained in the previous section, the observed dataset $(\tilde X_i, \tilde Y_i)_{i=1}^N \in \R^d \times \R$ is a corrupted version of the i.i.d. dataset $\{(X_i, Y_i)_i, i\in\{1,\ldots,N\}\}$ in a possibly adversarial way.
The assumptions made on good data $(X_i, Y_i)_i$ are gathered in the following setting: (see also \cite{lerasle2019lecture} or \cite{audibert2011}).
  
 \begin{setting}\label{setting}
We assume that the  following "heavy-tailed setting" holds:
 \begin{enumerate}
     \item $X_1$ has finite second moments; we write its $L^2$-moments matrix $\Sigma=\E(X_1 X_1^T)$ and we assume that $\Sigma$ is known. Let also $\Ba=\{x \in \R^d| \braket{x,\Sigma x} \leq 1\}$ be the ellipsoid associated with this $L_2$ structure and, for $u \in \R^d$ $\norm{u}_\Sigma^2=\braket{u|\Sigma u}$. 
     \item Let $\xi_1=Y_1-\braket{\beta^* , X_1}$ and assume that $\sigma^2 := \sup_{u \in \Ba} \E(\xi_1^2 \braket{u , X_1}^2)$ is such that $\sigma^2 < \infty$.
     \item There exists an universal constant $\gamma$ such that, for all $u\in\bR^d$,  $\gamma \E(\braket{u,X}^2) \geq  \sqrt{\E(\braket{u,X}^4)}$.
 \end{enumerate}
 We assume \emph{adversarial contamination} on the data: $(X_1,Y_1), \cdots,  (X_N, Y_N)$ denote $N$ i.i.d. random vectors in $\mathbb{R}^d\times \R$. The vectors $(X_1,Y_1), \cdots,  (X_N, Y_N)$ are not observed, instead, there exists a (possibly random) set $\mathcal{O}$ such that, for any $i \in \mathcal{O}^c, \ (\tilde X_i, \tilde Y_i) = (X_i, Y_i)$. The set of indices of outliers $\cO$ can be arbitrarily correlated with the data $(X_i, Y_i)$ -- for instance, only the $9N/10$ data with the largest $\norm{X_i}_2$ are observed -- and the outliers $(\tilde X_i, \tilde Y_i)_{i\in\cO}$ can be anything (they can be arbitrarily correlated between themselves and with the non-corrupted data $(X_i, Y_i), i=1, \ldots, N$). The only constraint on $\mathcal{O}$ is on its size: we suppose that we know an upper bound of $|\cO|$ (even though, this constraint may be dropped out if we use an adaptive scheme on $K$ such as Lepski's method in the end). The observed dataset is therefore $\{ (\tilde X_i, \tilde Y_i) : i =1,\cdots,  N \} $, and we want to recover $\beta
^*$ out of it. 
 \end{setting}
 
 Let us now comment on Setting~\ref{setting}. The first three assumptions deal with the heavy-tailed setup. It involves at most the existence of a fourth moment on the noise $\zeta$ and the functions class $\{u\in\bR^d\to\inr{u, X}\}$. The strongest assumption among them is the third one which is a $L_2/L_4$ norm equivalence assumption. This type of assumption has been used from the beginning for the statistical study of ERM and other classical methods in the heavy-tailed scenario for instance in \cite{MR3568047,MR3301300,LeM14} or in \cite{audibert2011}. It is also related to the small ball assumption from \cite{KoM13}. It has been systematically used for the study of Median-of-means estimators (see \cite{lerasle2019lecture}). The remaining of Setting~\ref{setting} deals with the adversarial contamination model. This covers many classical setup such as Huber's $\eps$-contamination model or the $\cO\cup\cI$ framework from \cite{guillaume2017learning,lecu2017robust}. It is somehow the strongest contamination model since an adversary is allowed to modify without any restriction up to $|\cO|$ data before the dataset is revealed to the statistician (for more details, see \cite{pmlr-v20-biggio11} or \cite{MR3909639}).
 



\subsection{Bounds on three stochastic processes} 
 In this section, we introduce three stochastic processes that play a central role in our analysis. We provide a high probability control for the supremum of the three of them into three lemmas. All the stochastic tools that we will need later will be related to one of the three processes. So that all the stochastic part of this work is gather into this section and in the end we will identify a single event onto which the study of the algorithm will be using purely deterministic arguments.
 
 We now state the three lemmas. The two first one deal with the classical quadratic and multiplier processes which already appeared in the study of ERM in \cite{lecue2013learning}. They naturally show up when the quadratic loss is used. The last one is new and is related to the descent algorithm we are studying below.  
 
 We split the data in $K$ blocks that we note $B_k, k \in \{1,\ldots, K\}$, in agreement with the Median-of-Mean framework.  We note $m=N/K$ the number of data in each blocks, and we note $\textbf{X}_k=(X_i)_{i \in B_k}$ and $ \tilde{\textbf{X}}_k=(\tilde X_i)_{i \in B_k}$. $\textbf{Y}_k$ and $ \tilde{\textbf{Y}}_k$ are defined the same way. We start by stating \cite[Lemma 2]{depersin2020robust}, that we will use several times in what follows. We refer the reader to \cite[Definition 1]{depersin2020robust} or \cite{vandervaart} for a definition of the VC-dimension of a set of functions 
  
  \begin{Lemma} \label{main}
  Let $\cF$ be a set of Boolean functions satisfying the following assumptions.

\begin{itemize}
    \item For all $f \in \cF$, $\PP\left(f(\textbf{X}_1, \textbf{Y}_1)=0\right) \geq 31/32$.
    \item $K \geq  C (\VC(\cF) \vee |\cO|)$ where $C$ is a universal constant.
\end{itemize}

 Then, with probability $\geq 1-\exp(-K/512)$, for all $f \in \cF$, there is at least $19K/20$ blocks $B_k$ on which $f(\tilde{\textbf{X}}_k, \tilde{\textbf{Y}}_k)=0$.
  \end{Lemma}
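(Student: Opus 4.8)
The plan is to first strip away the adversarial corruption and reduce the claim to a one-sided uniform deviation bound about the $K$ uncorrupted i.i.d. blocks, and then to establish that bound with a variance-sensitive VC argument. For the reduction, I would note that the corruption only affects those blocks containing at least one outlier, and there are at most $|\cO|$ of them; hence for every $f \in \cF$,
$$\#\{k : f(\tilde{\textbf{X}}_k, \tilde{\textbf{Y}}_k)=1\} \leq \#\{k : f(\textbf{X}_k, \textbf{Y}_k)=1\} + |\cO|.$$
Since $K \geq C(\VC(\cF) \vee |\cO|)$ gives $|\cO| \leq K/C$, it suffices to prove that, with probability $\geq 1-\exp(-K/512)$, the purely i.i.d. count obeys $\sup_{f \in \cF}\#\{k : f(\textbf{X}_k, \textbf{Y}_k)=1\} \leq K/20 - K/C$. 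This removes the adversary entirely and leaves a question about the i.i.d. blocks alone.

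Writing $N_f = \#\{k : f(\textbf{X}_k, \textbf{Y}_k)=1\}$, the first assumption yields $\E N_f \leq K/32$ for every $f$, and the target threshold $K/20$ sits \emph{strictly above} this mean because $1/20 > 1/32$; so a deviation inequality should close the gap. I would control the supremum over the (infinite) class $\cF$ by reducing it to a finite union through symmetrization followed by the Sauer--Shelah lemma: on the sample (after introducing a ghost sample, of size $2K$), $\cF$ realizes at most $(eK/\VC(\cF))^{\VC(\cF)}$ distinct dichotomies of the blocks. On each fixed dichotomy, $N_f$ is a sum of independent indicators of mean $\leq K/32$, so a multiplicative Chernoff bound gives $\PP(N_f \geq K/20) \leq \exp(-K\,D(1/20\,\|\,1/32))$, where $D$ is the Bernoulli relative entropy, with $D(1/20\,\|\,1/32) > 0$.

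A union bound over the dichotomies then produces a failure probability of order $\exp\!\big(\VC(\cF)\log(eK/\VC(\cF)) - K\,D(1/20\,\|\,1/32)\big)$. Because $K \geq C\,\VC(\cF)$, the entropy term is at most $\tfrac{K}{C}\log(eC)$, which is $o(K)$ as $C \to \infty$; choosing the universal constant $C$ large enough makes it smaller than $K\big(D(1/20\,\|\,1/32)-1/512\big)$, so the clean bound holds with probability $\geq 1-\exp(-K/512)$. Combined with the outlier reduction, this gives at least $19K/20$ blocks on which $f=0$, simultaneously for all $f \in \cF$.

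The delicate point is the constant-chasing in the last step: one must exploit the \emph{smallness} of the per-block probability ($\leq 1/32$) rather than a generic $\sqrt{\VC(\cF)/K}$ entropy bound. A naive symmetrization argument controls $\sup_f(\hat P_K f - Pf)$ only up to a McDiarmid fluctuation of order $\sqrt{\log(1/\delta)/K}$, which at the confidence level $\exp(-K/512)$ is itself of order $1/32$ and is therefore too coarse to separate the mean $1/32$ from the threshold $1/20$. Using the variance-sensitive (relative Vapnik--Chervonenkis / Bernstein) form of the inequality, equivalently the relative-entropy exponent $D(1/20\,\|\,1/32)$ above, is precisely what makes the separation---and hence the stated constants $19/20$, $31/32$, and $1/512$---go through.
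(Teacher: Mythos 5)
First, a point of reference: the paper contains no proof of this lemma at all --- it is imported verbatim as Lemma~2 of \cite{depersin2020robust}, so your attempt can only be compared with the cited source. The argument there shares your first step exactly (each of the at most $|\cO|$ corrupted samples lies in a single block, so at most $|\cO|\leq K/C$ blocks are affected, reducing the claim to a uniform bound on the i.i.d.\ count $N_f=\#\{k: f(\textbf{X}_k,\textbf{Y}_k)=1\}$), but the deviation step follows the standard median-of-means pattern rather than yours: $\E\sup_{f\in\cF}N_f$ is bounded by $K/32$ plus a symmetrized Rademacher complexity of order $\sqrt{K\,\VC(\cF)}$ (negligible once $K\geq C\,\VC(\cF)$ with $C$ large), and the fluctuation of the supremum is then controlled by the bounded-difference inequality. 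Your route through the \emph{relative}, variance-sensitive VC deviation bound is genuinely different, and your closing remark is quantitatively on point: at confidence $\exp(-K/512)$ McDiarmid's additive term is exactly $K/32$, and $1/32+1/32=1/16>1/20$, so the expectation-plus-bounded-differences argument only certifies about $(1-1/16)K$ good blocks, not $19K/20$ with these literal constants --- consistent with the fact that the paper itself silently downgrades the lemma's conclusion to $9K/10$ (and even $3K/4$) wherever it is used. Your Chernoff exponent $D(1/20\,\|\,1/32)\approx 1/200$ does beat $1/512$, with enough slack to absorb the Sauer--Shelah union cost $\VC(\cF)\log(eK/\VC(\cF))\leq (K/C)\log(eC)$ and the threshold shift $K/20-K/C$ once $C$ is a large universal constant, so your constants genuinely close where the cruder route does not.

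One step you must tighten, though: as written, ``on each fixed dichotomy, $N_f$ is a sum of independent indicators of mean $\leq K/32$'' is not a correct statement. The dichotomies counted by Sauer--Shelah are those \emph{realized on the (random) double sample}; once you condition on them, there is no independent Bernoulli randomness left to which a multiplicative Chernoff bound with the population mean $1/32$ could apply, and you cannot union-bound over data-dependent events without first symmetrizing. The rigorous version is the classical ghost-sample argument: bound $\PP(\exists f\in\cF: N_f\geq s)$ by twice the probability of a discrepancy event between the sample and ghost counts on the $2K$ blocks, apply Sauer--Shelah to the trace of $\cF$ on the double sample, and control each fixed trace element by a random-swap (hypergeometric/binomial) tail --- or simply invoke Vapnik's one-sided relative deviation inequality for VC classes off the shelf. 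This is standard machinery and repairs your sketch without changing any constant, so I would call it an imprecision rather than a fatal gap; with it fixed, your proof is correct and, for the constants exactly as stated in the lemma, sharper than the bounded-difference argument of the cited source.
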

  
  This lemma is used as a baseline to  prove the three following lemmas that will define the three stochastic events $\cA$, $\cB$ and $\cE$ that are needed for our algorithm to give a good estimate. We state in this section that all three fail with exponentially low probability. We introduce the rate
  \begin{equation}
  \label{defr}
       r= 8 \sigma \sqrt{ \frac{K}{N}}.
  \end{equation}
 
\begin{Lemma}[\textbf{Multiplier process}] \label{multiplier}

There is a universal constant $C_1$ so that the following hold. If $K\geq C_1 (d\vee |\cO|)$, the following event $\cE$ has probability $\geq 1 - \exp(-K/512) $ : for all $u \in \Ba$, there exist more than $19/20 K$ blocks $B_k$ so that :
 
  $$ \frac{1}{m}  |\sum_{i \in B_k}(\tilde Y_i-\braket{\beta^* , \tilde X_i})\braket{u ,\tilde X_i}| \leq   r. $$
\end{Lemma}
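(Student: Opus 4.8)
The plan is to derive this lemma directly from Lemma~\ref{main} by encoding the event $\cE$ through a suitable class of Boolean functions. For each $u\in\Ba$ define, on a block,
$$f_u(\textbf{X}_k,\textbf{Y}_k)=\mathbbm{1}\left\{\frac{1}{m}\Big|\sum_{i\in B_k}(Y_i-\braket{\beta^*,X_i})\braket{u,X_i}\Big|>r\right\},$$
and set $\cF=\{f_u:u\in\Ba\}$. Applying Lemma~\ref{main} to $\cF$ will give, with probability $\geq 1-\exp(-K/512)$, that for every $u$ there are at least $19K/20$ blocks on which $f_u(\tilde{\textbf{X}}_k,\tilde{\textbf{Y}}_k)=0$, which is exactly the stated conclusion. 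So the whole task reduces to verifying the two hypotheses of Lemma~\ref{main}: the per-block deviation bound on good data, and a bound on $\VC(\cF)$.

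For the first hypothesis, fix $u\in\Ba$ and write $Z_i=\xi_i\braket{u,X_i}=(Y_i-\braket{\beta^*,X_i})\braket{u,X_i}$ on good data. The first-order optimality condition defining $\beta^*$ gives $\E(\xi_1 X_1)=0$, hence $\E(Z_1)=\braket{u,\E(\xi_1 X_1)}=0$, while $\E(Z_1^2)=\E(\xi_1^2\braket{u,X_1}^2)\leq\sigma^2$ by the definition of $\sigma^2$ and $u\in\Ba$. As the $Z_i$ are i.i.d.\ within a block, the block average $m^{-1}\sum_{i\in B_1}Z_i$ has mean $0$ and variance at most $\sigma^2/m=\sigma^2 K/N$, so Chebyshev's inequality yields
$$\PP\big(f_u(\textbf{X}_1,\textbf{Y}_1)=1\big)\leq\frac{\sigma^2/m}{r^2}=\frac{\sigma^2 K/N}{64\,\sigma^2 K/N}=\frac{1}{64}\leq\frac{1}{32}.$$
Thus $\PP(f_u(\textbf{X}_1,\textbf{Y}_1)=0)\geq 31/32$, as required; this is exactly where the constant $8$ in the definition of $r$ is spent.

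The second hypothesis is the main point. The key observation is that $f_u$ depends on the block only through the single vector $v_k=m^{-1}\sum_{i\in B_k}(Y_i-\braket{\beta^*,X_i})X_i\in\R^d$, since $f_u(\textbf{X}_k,\textbf{Y}_k)=\mathbbm{1}\{|\braket{u,v_k}|>r\}$. Hence every $f_u$ is the pullback along the fixed map $(\textbf{X}_k,\textbf{Y}_k)\mapsto v_k$ of the indicator of the complement of a symmetric slab $\{v:|\braket{u,v}|\leq r\}$ in $\R^d$. Composition with a fixed map cannot increase shattering power, so $\VC(\cF)$ is at most the VC-dimension of this slab-complement class. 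Each slab is the intersection of the halfspaces $\braket{u,v}\leq r$ and $\braket{-u,v}\leq r$; since halfspaces in $\R^d$ have VC-dimension $d+1$ and a Boolean combination of a bounded number of classes of VC-dimension $O(d)$ again has VC-dimension $O(d)$, we obtain $\VC(\cF)\leq C'd$ for a universal constant $C'$.

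It then suffices to choose $C_1$ large enough that $K\geq C_1(d\vee|\cO|)$ forces $K\geq C(\VC(\cF)\vee|\cO|)$ with the constant $C$ of Lemma~\ref{main}; both hypotheses hold and the conclusion follows. The only step genuinely requiring care is the VC bound, where one must correctly handle the two-sided (slab) threshold rather than a single halfspace; everything else is a routine mean-zero and variance computation once the problem has been linearized through $v_k$.
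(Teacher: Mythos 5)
Your proof is correct and follows the same skeleton as the paper's: encode the event through a class of per-block Boolean functions, verify the hypothesis $\PP(f_u=1)\leq 1/32$ by Chebyshev (your $1/64$ computation, spending the factor $8$ in the definition of $r$, is exactly the paper's bound $\E(\xi_1^2\braket{u,X_1}^2)/(mr^2)\leq 1/32$), bound $\VC(\cF)\lesssim d$, and invoke Lemma~\ref{main}. The one genuine difference is the mechanism behind the VC bound. The paper handles the two-sidedness by squaring, writing the exceedance as $\braket{u,\sum_i(y_i-\braket{\beta^*,x_i})x_i}^2\geq m^2r^2$, viewing each block as mapped to a degree-$2$ polynomial in $u$, and applying Corollary~\ref{polyVC}, a consequence of Warren's theorem on sign patterns of polynomial systems. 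You instead exploit the linearity in $u$ after collapsing each block to the single vector $v_k$: each $f_u$ becomes the pullback of the complement of a slab, i.e.\ a union of two coupled halfspaces, and since the coupled class is a subclass of the class of all pairwise unions of halfspaces, the standard stability of VC dimension under Boolean combinations gives $\VC(\cF)\leq C'd$; your implicit handling of the coupling (same $u$ in both halfspaces) via the subclass observation is the right way to make this rigorous. Your route is more elementary and self-contained for this particular lemma, avoiding Warren's theorem entirely. What the paper's polynomial-sign-pattern corollary buys is uniformity of tooling: the same Corollary~\ref{polyVC} is reused verbatim in the proofs of Lemma~\ref{produitscal} (degree-$4$ polynomials in $(u,v)$) and Lemma~\ref{init} (a statistic quadratic in the parameter $\beta$), where the dependence on the index is genuinely polynomial rather than linear and your halfspace linearization would not be available, so the paper proves the corollary once and applies it three times.
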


This can also be also written as: for all $u \in \R^d$ there exist more than $19/20 K$ blocks $B_k$ so that :
 
  $$ \frac{1}{m}  |\sum_{i \in B_k}(\tilde Y_i-\braket{\beta^* , \tilde X_i})\braket{u ,\tilde X_i}| \leq   r ||u||_\Sigma. $$

\begin{Lemma}[\textbf{Quadratic process}]  \label{produitscal}
There is $C_1$ a universal constant so that the following hold. If $K\geq C_1 (d\vee |\cO|)$ the following event $\cB$ has probability probability $\geq 1 - \exp(-K/512) $: for all $u, v \in \R^d$, there exists more than $19/20K$ blocks $B_k$ so that :

 $$|\frac{1}{m}\sum_{i \in B_k} \braket{u ,\tilde X_i}\braket{v ,\tilde X_i}-\braket{u , \Sigma v} | \leq   6 \gamma \sqrt{\frac{1}{m}} \norm{u}_\Sigma \norm{v}_\Sigma$$

In particular, when $m \geq 360 \ 000 \gamma^2 $, on the event $\cB$, for all $u \in R^d$

$$99/100 \braket{u,\Sigma u} \leq \frac{1}{m}\sum_{i \in B_k}\braket{u ,\tilde  X_i}^2 \leq 101/100 \braket{u,\Sigma u} . $$
\end{Lemma}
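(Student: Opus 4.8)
The plan is to obtain event $\cB$ as a direct application of Lemma~\ref{main} to a carefully chosen family of Boolean functions indexed by pairs $(u,v)$. Since both sides of the target inequality scale as $\norm{u}_\Sigma\norm{v}_\Sigma$ under $u\mapsto\lambda u,\ v\mapsto\mu v$, the inequality is homogeneous, and I would first reduce by homogeneity to the normalised case $\norm{u}_\Sigma=\norm{v}_\Sigma=1$ (the cases $u=0$ or $v=0$ being trivial). For such a pair, define on a generic block the indicator
$$ f_{u,v}(\textbf{X}_k)=\mathbbm{1}\left\{\left|\frac{1}{m}\sum_{i\in B_k}\braket{u,X_i}\braket{v,X_i}-\braket{u,\Sigma v}\right|>6\gamma\sqrt{1/m}\right\}, $$
and let $\cF=\{f_{u,v}:\norm{u}_\Sigma=\norm{v}_\Sigma=1\}$. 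The event $\cB$ is precisely the statement that each $f\in\cF$, evaluated on the \emph{corrupted} blocks $\tilde{\textbf{X}}_k$, vanishes on at least $19K/20$ of them, so it suffices to verify the two hypotheses of Lemma~\ref{main}.

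The first hypothesis, that a single clean block satisfies the bound with probability at least $31/32$, I would get from Chebyshev's inequality. The block average has expectation $\braket{u,\Sigma v}$, so the quantity inside the indicator is centred, and its variance is at most $\frac{1}{m}\E\!\left[\braket{u,X_1}^2\braket{v,X_1}^2\right]$. By Cauchy--Schwarz this is at most $\frac{1}{m}\sqrt{\E\braket{u,X_1}^4}\sqrt{\E\braket{v,X_1}^4}$, and the $L_2$--$L_4$ norm equivalence (third item of Setting~\ref{setting}) bounds each fourth-moment factor by $\gamma\,\norm{u}_\Sigma^2=\gamma$ and $\gamma\,\norm{v}_\Sigma^2=\gamma$, giving a variance bound $\gamma^2/m$. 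Chebyshev's inequality at the threshold $6\gamma/\sqrt{m}$ then bounds the per-block failure probability by $(\gamma^2/m)/(36\gamma^2/m)=1/36\le 1/32$, which is the first hypothesis.

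The second hypothesis requires controlling $\VC(\cF)$, and this is the step I expect to be the main obstacle. The difficulty is that $\cF$ is parametrised by two vectors and that the threshold carries the factor $\norm{u}_\Sigma\norm{v}_\Sigma$, so the defining inequality is not polynomial as written. I would eliminate the square roots by squaring: since the right-hand side is non-negative, the bad event $\{|A|>B\}$ coincides with $\{A^2-B^2>0\}$, where $A$ is bilinear in $(u,v)$ and $B^2=(36\gamma^2/m)\braket{u,\Sigma u}\braket{v,\Sigma v}$, so that for each fixed block $A^2-B^2$ is a polynomial of degree at most $4$ in the $2d$ real parameters $(u,v)$. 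The VC dimension of a family of thresholds of fixed-degree polynomials in $p$ variables is $O(p)$ by the Warren (Milnor--Thom type) bound on the number of sign patterns of finitely many polynomials, whence $\VC(\cF)\le C'd$ for a universal $C'$. Choosing $C_1$ so that $C_1(d\vee|\cO|)\ge C(\VC(\cF)\vee|\cO|)$, with $C$ the constant of Lemma~\ref{main}, and invoking that lemma delivers $\cB$ with probability at least $1-\exp(-K/512)$.

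For the ``in particular'' statement I would specialise to $u=v$ on the event $\cB$. The bound then reads $|\frac{1}{m}\sum_{i\in B_k}\braket{u,\tilde X_i}^2-\norm{u}_\Sigma^2|\le 6\gamma\,\norm{u}_\Sigma^2/\sqrt{m}$ on at least $19K/20$ blocks, and the assumption $m\ge 360\,000\,\gamma^2$ forces $6\gamma/\sqrt{m}\le 1/100$; rearranging yields $\frac{99}{100}\braket{u,\Sigma u}\le \frac{1}{m}\sum_{i\in B_k}\braket{u,\tilde X_i}^2\le \frac{101}{100}\braket{u,\Sigma u}$ on those blocks, as claimed.
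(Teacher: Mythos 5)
Your proof is correct and takes essentially the same route as the paper's: reduce by homogeneity, apply Chebyshev with Cauchy--Schwarz and the $L_2$--$L_4$ equivalence to get a per-block failure probability of $1/36\le 1/32$, bound $\VC(\cF)\le C'd$ via the Warren sign-pattern bound (Corollary~\ref{polyVC}) after squaring the defining inequality into a degree-$4$ polynomial in the $2d$ parameters $(u,v)$, and conclude with Lemma~\ref{main}. Your handling of the squaring step and the arithmetic $6\gamma/\sqrt{m}\le 1/100$ for the ``in particular'' clause are, if anything, spelled out more explicitly than in the paper's own proof.
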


\begin{Lemma} \label{init}
There is $C_1$ a universal constant so that the following hold. If $K\geq C_1 (d\vee |\cO|)$ and $m \geq 128 \gamma$, then the following event $\cA$ has probability $\exp(-K/512)$. For all $\beta_c \in \R^d$, there exist more than $19/20K$ blocks $B_k$ so that : 

$$ \norm{\tilde Z_k(\beta_c)  }_2 \leq 8\sqrt{\frac{\E(||(\xi_1 \Sigma^{-1/2} X_1||_2^2)}{m}} +\sqrt{d}\norm{\beta_c-\beta^*}_\Sigma \leq \sqrt{d}(r+\norm{\beta_c-\beta^*}_\Sigma) $$

where 
$$\tilde Z_k(\beta_c)=\frac{1}{m} \sum_{i \in B_k}(\tilde Y_i-\beta_c \tilde X_i) \Sigma^{-1/2} \tilde X_i $$
with $r$ defined as in \ref{defr}.
\end{Lemma}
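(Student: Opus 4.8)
The plan is to obtain the statement from a single application of Lemma~\ref{main} to a well-chosen class of Boolean functions, after a bias--variance decomposition of $\tilde Z_k(\beta_c)$ on the uncorrupted blocks. On any block $B_k$ containing no outlier one has $\tilde Z_k(\beta_c)=Z_k(\beta_c)$, and writing $Y_i-\braket{\beta_c , X_i}=\xi_i+\braket{\beta^*-\beta_c , X_i}$ gives
\[
Z_k(\beta_c)=\underbrace{\frac1m\sum_{i\in B_k}\xi_i\,\Sigma^{-1/2}X_i}_{W_k}+\underbrace{\frac1m\sum_{i\in B_k}\braket{\beta^*-\beta_c , X_i}\,\Sigma^{-1/2}X_i}_{V_k(\beta_c)} .
\]
The point of this split is that the noise term $W_k$ is independent of $\beta_c$ and will carry the $8\sqrt{\E(\norm{\xi_1\Sigma^{-1/2}X_1}_2^2)/m}$ part of the bound, while the bias term $V_k(\beta_c)=\Sigma^{-1/2}\widehat\Sigma_k(\beta^*-\beta_c)$ (with $\widehat\Sigma_k$ the empirical second-moment matrix of block $k$) carries the $\sqrt d\,\norm{\beta_c-\beta^*}_\Sigma$ part.

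Next I would verify the per-block hypothesis of Lemma~\ref{main}, namely that for each fixed $\beta_c$,
\[
\PP\Bigl(\norm{Z_1(\beta_c)}_2>8\sqrt{\tfrac{\E(\norm{\xi_1\Sigma^{-1/2}X_1}_2^2)}{m}}+\sqrt d\,\norm{\beta_c-\beta^*}_\Sigma\Bigr)\le \tfrac1{32},
\]
by a union bound over the two terms together with two second-moment estimates. For $W_1$: the first-order optimality of $\beta^*$ gives $\E(\xi_1X_1)=0$, so $\E W_1=0$ and, by within-block independence, $\E\norm{W_1}_2^2=\E(\norm{\xi_1\Sigma^{-1/2}X_1}_2^2)/m$, whence $\PP(\norm{W_1}_2>8\sqrt{\cdot}\,)\le 1/64$ by Chebyshev. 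For $V_1(\beta_c)$, writing $w=\beta^*-\beta_c$, one computes $\E V_1(\beta_c)=\Sigma^{1/2}w$ (of norm $\norm w_\Sigma$) and $\E\norm{V_1(\beta_c)-\E V_1(\beta_c)}_2^2\le \tfrac1m\E\bigl(\braket{w , X_1}^2\,X_1^\top\Sigma^{-1}X_1\bigr)$; here the $L_2$/$L_4$ norm equivalence of Setting~\ref{setting} is exactly what is needed, since by Cauchy--Schwarz this is at most $\tfrac1m\sqrt{\E\braket{w , X_1}^4}\,\sqrt{\E(X_1^\top\Sigma^{-1}X_1)^2}\lesssim \gamma^2 d\,\norm w_\Sigma^2/m$. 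Since the threshold $\sqrt d\,\norm w_\Sigma$ leaves a full factor $\sqrt d$ of slack over $\norm{\E V_1(\beta_c)}_2=\norm w_\Sigma$, Chebyshev gives $\PP(\norm{V_1(\beta_c)}_2>\sqrt d\,\norm w_\Sigma)\le 1/64$ once $m$ is large enough relative to $\gamma$ (as in the hypothesis on $m$), and the union bound closes this step.

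The step I expect to be the main obstacle is bounding the VC dimension of the class $\cF=\{f_{\beta_c}:\beta_c\in\R^d\}$, where $f_{\beta_c}$ is the indicator of the event in the displayed probability. Viewing $\norm{Z_k(\beta_c)}_2^2$ minus the squared threshold naively as a quadratic form in $\beta_c$ would only give $\VC(\cF)=O(d^2)$, which is incompatible with the hypothesis $K\ge C_1(d\vee|\cO|)$. The fix is that $f_{\beta_c}$ is the sign of a \emph{single} polynomial inequality in the $d$ coordinates of $\beta_c$ (after introducing one auxiliary variable $t\ge 0$ with $t^2=\norm{\beta_c-\beta^*}_\Sigma^2$ to remove the square root in the threshold), with block-dependent coefficients of bounded degree. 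A Warren/Milnor--Thom count of sign patterns then bounds the number of dichotomies of $n$ blocks by $(Cn)^{O(d)}$, forcing $\VC(\cF)=O(d)$, which is what makes the VC condition of Lemma~\ref{main} compatible with $K\ge C_1 d$.

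Finally, with the per-block bound and the VC bound in hand, Lemma~\ref{main} applied to $\cF$ yields, with probability $\ge 1-\exp(-K/512)$ and simultaneously for all $\beta_c$, at least $19K/20$ blocks on which $f_{\beta_c}(\tilde{\textbf{X}}_k,\tilde{\textbf{Y}}_k)=0$; crucially, the up-to-$|\cO|$ corrupted blocks are absorbed automatically through the condition $K\ge C|\cO|$, so no separate treatment of the outliers is needed. This is the first inequality of the statement. The second inequality is purely deterministic: with $g_l=\Sigma^{-1/2}e_l$ one has $\norm{g_l}_\Sigma=1$, so by the definition of $\sigma^2$ in Setting~\ref{setting}, $\E(\norm{\xi_1\Sigma^{-1/2}X_1}_2^2)=\sum_{l=1}^d\E(\xi_1^2\braket{g_l , X_1}^2)\le d\sigma^2$; hence $8\sqrt{\E(\norm{\xi_1\Sigma^{-1/2}X_1}_2^2)/m}\le 8\sigma\sqrt{d/m}=\sqrt d\,r$ (recalling $r=8\sigma\sqrt{K/N}$ and $m=N/K$), which upgrades the bound to $\sqrt d\,(r+\norm{\beta_c-\beta^*}_\Sigma)$.
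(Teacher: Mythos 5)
Your proposal is correct and follows essentially the same route as the paper's proof: the same noise-plus-bias decomposition of $Z_k(\beta_c)$, the same per-block Chebyshev estimates with a $1/64+1/64\le 1/32$ budget (including the $L_2$--$L_4$ equivalence applied to $\Sigma^{-1/2}X$ to get $\E\bigl(\norm{\Sigma^{-1/2}X_1}_2^4\bigr)^{1/2}\le \gamma d$), and the same application of Lemma~\ref{main} with an $O(d)$ VC bound from Warren's theorem via Corollary~\ref{polyVC} --- your auxiliary-variable trick for the square root is just a cleaner variant of the paper's device of squaring the threshold to $d(r^2+\norm{\beta_c-\beta^*}_\Sigma^2)$. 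Your only genuine addition is spelling out the deterministic second inequality via $\E\bigl(\norm{\xi_1\Sigma^{-1/2}X_1}_2^2\bigr)=\sum_{l=1}^d\E\bigl(\xi_1^2\braket{\Sigma^{-1/2}e_l,X_1}^2\bigr)\le d\sigma^2$, which the paper asserts without proof.
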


\textbf{We assume for the rest of this work, that $K \geq C_1 (d \vee |\cO| ) $, that $m \geq 360 \ 000 \gamma^2$. We moreover assume that events $\cal A$,  $\cal B$ and $\cal E$ hold.}

\section{Analysis of the algorithm}
\label{section:algo}

The general algorithm, as in \cite{lei2019fast}, is a basic descent procedure :

\vspace{0.7cm}
 \begin{algorithm}[H]\label{algo:descent}
\SetKwInOut{Input}{input}\SetKwInOut{Output}{output}\SetKw{Or}{or}
\SetKw{Return}{Return}
\Input{$\tilde X_1, \tilde Y_1 \ldots, \tilde X_N, \tilde Y_N$, $K \geq C_1 (d \vee |\cO| )$, and $T_{des}$. }
\Output{A robust estimator of $\beta^*$}  
\BlankLine
Initialize $\beta_0= 0$\\
\For{ $t =1, ...,  T_{des} $}{
 $d_t = \texttt{stepSize}(\tilde X,\tilde Y, K,  \beta_t, T_{des} )$ \\
 $g_t = \texttt{descentDirection}(\tilde X,\tilde Y, K,  \beta_t, d_t, T_{des})$ \\
 $\beta_{t+1}= \beta_t- d_t g_t$
}
\Return $\texttt{ROUND}(Z,\theta,(u_t)_t)$.
 \caption{Main descent algorithm}
\end{algorithm}
\vspace{0.7cm}

A good descent direction $v$ should check $\braket{v, \Sigma \beta_t-\beta^*} \geq  c_0 ||\beta_t-\beta^*||_\Sigma$ and $||v||_\Sigma=1$ for some constant $c_0 <1$, and a good step size should check $d_t \in [c_1\norm{\beta_t-\beta^*}_\Sigma, c_0 \norm{\beta_t-\beta^*}_\Sigma] $ with $0<c_1 < c_0$ so that 

$$||\beta_{t+1}- \beta^*||_\Sigma^2 \leq (1-2 c_0c_1 + c_1^2)||\beta_{t}- \beta^*||_\Sigma^2 \leq \alpha ||\beta_{t}- \beta^*||_\Sigma^2$$

with $\alpha < 1$. In order to find a good descent direction, we will be using the central quantity  $$Z_k(\beta_c)=\frac{1}{m} \sum_{i \in B_k}( Y_i-\beta_c  X_i) \Sigma^{-1/2}  X_i$$ already mentioned in the previous section (see Lemma \ref{init}). We decompose $ Z_k$ as $ Z_k(\beta_c)= \frac{1}{m} \sum_{i \in B_k} \xi_i \Sigma^{-1/2}  X_i+ \sum_{i \in B_k} \braket{\beta^*-\beta_c, X_j }\Sigma^{-1/2} X_i$. The first term has mean zero by definition of $\beta
^*$, but the expectation of the second one is $\Sigma^{1/2}(\beta^*-\beta_c)$. So if we find a direction so that most $Z_k(\beta_c)$ are "aligned" with this direction, we might have a shot at finding a descent direction. The introduction of this quantity $Z$ is the main novelty of this work. We will see in the rest of this section that finding such a direction indeed leads to a nice  descent, and we will show how to find it efficiently. \\

More precisely, we will show that the algorithms $\texttt{stepSize} $ and $\texttt{descentDirection} $ are good step size and descent direction. The main tool is a modification of the algorithm \texttt{APPROXBREGMAN} from \cite{lei2019fast} (which is in turn an adaptation from \cite{karnin2011furthest}), that we called \texttt{BregmanRegression}\\

We summarize the properties of this descent in a main theorem :
\begin{theo} \label{theo:main}
On the event $\cE, \cA, \cB$, each iteration of Algorithm \ref{algo:descent} checks the following with probability $\geq 1-\exp(K)/T_{des}$
\begin{itemize}
    \item Whenever $ ||\beta_c-\beta^*||_{\Sigma} \geq 100 r $, 
    $$\norm{\beta_{c+1}-\beta^*}_\Sigma \leq  (1-2/100.000) \norm{\beta_c-\beta^*}_\Sigma $$
    \item Whenever $ ||\beta_c-\beta^*||_{\Sigma} \leq 100 r $, 
    $$ \norm{\beta_{c+1}-\beta^*}_\Sigma \leq 102 r$$
Moreover, each iteration runs in time $\cO((Nd+K^2d) \times \polylog(d,K) )$
\end{itemize}

\end{theo}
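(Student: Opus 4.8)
The plan is to reduce the statement to two structural facts about the subroutines \texttt{stepSize} and \texttt{descentDirection}: on the event $\cE \cap \cA \cap \cB$ they return, up to the internal randomness of \texttt{BregmanRegression}, a \emph{good direction} and a \emph{good step size} in the sense described just before the theorem. Writing $R = \norm{\beta_c - \beta^*}_\Sigma$, suppose we have produced $g_t$ with $\norm{g_t}_\Sigma = 1$ and $\braket{g_t, \Sigma(\beta_c - \beta^*)} \geq c_0 R$, and $d_t \in [c_1 R, c_0 R]$ with $0 < c_1 < c_0$. Then
$$\norm{\beta_{c+1} - \beta^*}_\Sigma^2 = R^2 - 2 d_t \braket{g_t, \Sigma(\beta_c - \beta^*)} + d_t^2 \norm{g_t}_\Sigma^2 \leq R^2 - 2 c_0 d_t R + d_t^2,$$
and since the right-hand side is decreasing in $d_t$ on $[c_1 R, c_0 R]$ it is maximised at $d_t = c_1 R$, giving $\norm{\beta_{c+1} - \beta^*}_\Sigma^2 \leq (1 - 2 c_0 c_1 + c_1^2) R^2$. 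Choosing $c_0, c_1$ so that $2 c_0 c_1 - c_1^2 = 2/100000$ yields the first bullet, so the whole difficulty lies in producing $g_t$ and $d_t$ with the advertised guarantees.

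For the direction I would work in $\Sigma^{-1/2}$-coordinates through the decomposition recorded in the excerpt, $Z_k(\beta_c) = \tfrac{1}{m}\sum_{i \in B_k} \xi_i \Sigma^{-1/2} X_i + \tfrac{1}{m}\sum_{i \in B_k}\braket{\beta^* - \beta_c, X_i}\Sigma^{-1/2} X_i =: W_k + M_k$. For a unit $u$, Lemma~\ref{multiplier} (event $\cE$) bounds $|\braket{u, W_k}| \leq r$ on at least $19K/20$ blocks, while Lemma~\ref{produitscal} (event $\cB$), applied with the vectors $\Sigma^{-1/2}u$ and $\beta^*-\beta_c$, shows that $\braket{u, M_k}$ lies within $6\gamma m^{-1/2} R \leq R/100$ of its mean $\braket{u, \Sigma^{1/2}(\beta^* - \beta_c)}$ on at least $19K/20$ blocks; here the $L_2$--$L_4$ equivalence of Setting~\ref{setting} is exactly what controls the quadratic fluctuation. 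Intersecting the two majorities, on a $9K/10$ fraction of blocks $\braket{u, Z_k(\beta_c)}$ equals $\braket{u, \Sigma^{1/2}(\beta^* - \beta_c)}$ up to an additive error $\leq r + R/100$, which is $\leq R/50$ once $R \geq 100 r$. Hence the robust (median-over-blocks) aggregate that \texttt{BregmanRegression} approximately maximises is, up to an additive $R/50$, the linear functional $u \mapsto \braket{u, \Sigma^{1/2}(\beta^* - \beta_c)}$, whose maximiser $u^\star \propto \Sigma^{1/2}(\beta^* - \beta_c)$ attains value $R$. The approximate-furthest-hyperplane guarantee inherited from \cite{lei2019fast, pmlr-v23-karnin12} then returns a unit $u$ with $\braket{u, \Sigma^{1/2}(\beta^* - \beta_c)} \geq c_0 R$, and $g_t = -\Sigma^{-1/2} u$ gives $\norm{g_t}_\Sigma = \norm{u}_2 = 1$ and $\braket{g_t, \Sigma(\beta_c - \beta^*)} = \braket{u, \Sigma^{1/2}(\beta^* - \beta_c)} \geq c_0 R$, as required.

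For the step size I would use the same robust aggregate: both subroutines are built on \texttt{BregmanRegression}, and once a good direction is available the median over blocks of $\braket{u, Z_k(\beta_c)}$ estimates $R$ up to the additive $R/50$ above, so a suitably scaled version lands in $[c_1 R, c_0 R]$ whenever $R \geq 100 r$; this closes the first bullet. For the second bullet ($R \leq 100 r$) the direction need not be informative, but $\norm{g_t}_\Sigma = 1$ gives the a priori estimate $\norm{\beta_{c+1} - \beta^*}_\Sigma \leq R + d_t$ by the triangle inequality in $\norm{\cdot}_\Sigma$, and the same median estimate, which is $\leq c_0(R + O(r))$, forces $d_t \leq 2r$ for small enough $c_0$, whence $\norm{\beta_{c+1} - \beta^*}_\Sigma \leq 102 r$. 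One must also check there is no overshoot, i.e. that the median stays $O(r)$ when $\beta_c$ is already at the noise floor, which again follows from the $W_k$-bound of Lemma~\ref{multiplier}.

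Finally, the per-iteration cost splits into forming the $K$ vectors $Z_k(\beta_c) \in \R^d$ at cost $\cO(Nd)$, and running \texttt{BregmanRegression}, whose multiplicative-weights loop performs $\polylog(K,d)$ rounds, each dominated by a top-singular-vector computation over the $K$ block-vectors at cost $\cO(K^2 d)$, for a total of $\cO((Nd + K^2 d)\,\polylog(d,K))$; the per-iteration success probability $1-\exp(-cK)/T_{des}$ comes from the internal randomisation of this spectral step and is arranged so that a union bound over the $T_{des}$ iterations is harmless. The main obstacle, in my view, is precisely the direction step: one must show that the quantity \texttt{BregmanRegression} actually optimises (a multiplicative-weights relaxation of the furthest-hyperplane / robust-median objective) certifies a genuine Euclidean alignment $\braket{g_t, \Sigma(\beta_c - \beta^*)} \geq c_0 R$, which is where the three events $\cE, \cA, \cB$ and the $L_2$--$L_4$ equivalence of Setting~\ref{setting} must be combined quantitatively rather than merely qualitatively.
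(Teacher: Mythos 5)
Your reduction to a good direction plus a good step size, the contraction computation $\norm{\beta_{c+1}-\beta^*}_\Sigma^2 \leq (1-2c_0c_1+c_1^2)\norm{\beta_c-\beta^*}_\Sigma^2$, and your analysis of the direction are essentially the paper's: the decomposition $Z_k = W_k + M_k$, the use of the multiplier process (event $\cE$, Lemma~\ref{multiplier}) for $W_k$ and of the quadratic process (event $\cB$, Lemma~\ref{produitscal}) for $M_k$, and the majority-intersection trick are exactly the content of Lemmas~\ref{theta} and~\ref{goodu}, which you re-derive correctly. However, your step-size mechanism has a genuine gap. You propose, ``once a good direction is available,'' to estimate $\norm{\beta_c-\beta^*}_\Sigma$ by the median of $\braket{u, Z'_k}$ along the returned direction $u$. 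But Lemma~\ref{algo} only certifies that $2/10$ of the pruned blocks satisfy $\braket{u, Z'_k} \geq \theta/10$ for the output $u$; the median along that $u$ therefore has no useful lower bound, and it can be far below $\norm{\beta_c-\beta^*}_\Sigma$. Without the lower bound $d_t \geq c_1\norm{\beta_c-\beta^*}_\Sigma$, the contraction in the first bullet fails (a too-small step yields no progress). Note also that in Algorithm~\ref{algo:descent} the paper calls \texttt{stepSize} \emph{before} \texttt{descentDirection}, which is incompatible with your ordering. The paper's fix is a binary search over $\theta$ exploiting the two one-sided certificates in the remark of Lemma~\ref{algo}: \texttt{BregmanRegression} provably does not fail when $\theta \leq 98/100\norm{\beta_c-\beta^*}_\Sigma$ (first part of Lemma~\ref{theta}), and provably returns ``Fail'' when $\theta \geq 10(101/100\norm{\beta_c-\beta^*}_\Sigma + r)$ (second part of Lemma~\ref{theta}), so the search brackets $\norm{\beta_c-\beta^*}_\Sigma$ within constant factors and the rescaled output lies in $[c_1\norm{\beta_c-\beta^*}_\Sigma,\, c_0\norm{\beta_c-\beta^*}_\Sigma]$ with $c_1 = 49/100 \times 1/10 \times 2/100 \times 100/102$ and $c_0 = 2/100$. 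Your handling of the noise-floor case (triangle inequality plus $d_t \leq 2r$, giving $102r$) is fine once this is repaired, and is in fact arithmetically cleaner than the paper's own write-up of that case.

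A second, related omission: you never use event $\cA$ (Lemma~\ref{init}) or the pruning step, yet they carry the runtime claim. It is $\cA$ plus pruning that give $\max_{k\leq K'}\norm{Z'_k}_2 \leq \sqrt{d}\,(r + \norm{\beta_c-\beta^*}_\Sigma)$, which bounds the required number of multiplicative-weights rounds $T \geq 2\log(K')R^2/\theta^2$ by $\cO(K\log K')$ (using $d \lesssim K$) and keeps the binary search to $\cO(\log d)$ steps. Your cost accounting is also misattributed: the loop runs $T \approx K\log K'$ rounds, each costing $\cO(Kd) + \polylog(d)$ (one power-method step on a $K\times d$ matrix plus a Bregman projection), not $\polylog(K,d)$ rounds of $\cO(K^2d)$ each; the product $\cO((Nd + K^2d)\polylog(d,K))$ happens to agree, but without the $\cA$-based control of $R^2/\theta^2$ neither factorization is justified.
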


To prove this, we need a few intermediate lemma and algorithms. All the results hold on the event $\cA \cap \cB \cap \cE$ We first state some essential remarks about pruning. Because we are on $\cA$, we know that  $9/10K$ blocks check $\norm{\tilde Z_k(\beta_c)  }_2  \leq \sqrt{d}(r+\norm{\beta_c-\beta^*}_\Sigma)$. For simplicity, we will just note $\tilde Z_k(\beta_c) =\tilde Z_i$ We note $K'= \lfloor 9/10K \rfloor$, and we note $Z'_{1}, ..., Z'_{K'}$ the $K'$ smallest $\tilde Z_i$, as returned by algorithm \ref{algo:pruning}. For the rest of this part we will mainly work with the pruned data, so that, on $\cA$, $R := \max_{k\leq K'}|| Z'_k|| <  \sqrt{d}(r+\norm{\beta_c-\beta^*}_\Sigma).$  

\vspace{0.7cm}
 \begin{algorithm}[H]\label{algo:pruning}
\SetKwInOut{Input}{input}\SetKwInOut{Output}{output}\SetKw{Or}{or}
\SetKw{Return}{Return}
\Input{$\tilde Z_1,  ... , \tilde Z_K$ }
\Output{Pruned $\tilde Z_{\sigma(1)}, ..., \tilde Z_{\sigma(K')}$}  
\BlankLine
Compute the norms $||\tilde Z_i||$ and sort them $\tilde Z_{\sigma(1)}< \tilde Z_{\sigma(2)} <... < \tilde Z_{\sigma(K)}$ \\
Remove the top $1/20$\\
\Return $\tilde Z_{\sigma(1)}, ..., \tilde Z_{\sigma(K')} := (Z'_{k})_{k\in\{1, ..., K'\}}$.
 \caption{Pruning algorithm}
\end{algorithm}
\vspace{0.7cm}

Now the first lemma of this part states that if $\cQ^{8/10}$ is the $8/10$ quantile of a serie,  $\max_{u\in \mathcal{B}_2^d}\cQ^{8/10}( \braket{Z'_i, u} )$ is a good estimate of the distance $||\beta_c-\beta^*||_\Sigma$ ($\cB_2^d$ denote the unit ball for the canonical euclidean distance on $\R^d$)

\begin{Lemma} \label{theta}
There is $u \in \mathcal{B}_2^d$ so that, for at least 8/10 of the $k \in \{1, ..., K'\}$

$$ \braket{Z'_k, u} \geq \theta_1  $$

with $\theta_1 := 99/100||\beta_c-\beta^*||_\Sigma-r$

Moreover, for any $u \in \cB_2^d$,  at least $8/10$ of the pruned blocks check, $\braket{Z'_i,  u} \leq r + 101/100||\beta_c-\beta^*||_\Sigma$
\end{Lemma}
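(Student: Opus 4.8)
The plan is to expand $\braket{Z'_k,u}$ using the decomposition of $\tilde Z_k(\beta_c)$ into a multiplier part and a quadratic part, and to control each part with Lemma~\ref{multiplier} and Lemma~\ref{produitscal} respectively. Writing $w=\Sigma^{-1/2}u$, so that $\braket{\Sigma^{-1/2}\tilde X_i,u}=\braket{\tilde X_i,w}$ and $\norm{w}_\Sigma=\norm{u}_2\leq 1$ for $u\in\mathcal{B}_2^d$, I would start from
\begin{equation*}
\braket{\tilde Z_k(\beta_c),u}=\frac{1}{m}\sum_{i\in B_k}(\tilde Y_i-\braket{\beta^*,\tilde X_i})\braket{\tilde X_i,w}+\frac{1}{m}\sum_{i\in B_k}\braket{\beta^*-\beta_c,\tilde X_i}\braket{\tilde X_i,w}.
\end{equation*}

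First I would bound the multiplier term: by Lemma~\ref{multiplier} (on $\cE$), for the fixed direction $w$ there are at least $\tfrac{19}{20}K$ blocks on which it is at most $r\norm{w}_\Sigma\leq r$ in absolute value. Next I would approximate the quadratic term: applying Lemma~\ref{produitscal} (on $\cB$) to the pair $(\beta^*-\beta_c,w)$, on at least $\tfrac{19}{20}K$ blocks this term equals $\braket{\beta^*-\beta_c,\Sigma w}=\braket{\Sigma^{1/2}(\beta^*-\beta_c),u}$ up to an error $6\gamma\sqrt{1/m}\,\norm{\beta^*-\beta_c}_\Sigma\norm{w}_\Sigma$, which is at most $\tfrac{1}{100}\norm{\beta_c-\beta^*}_\Sigma$ since $m\geq 360\,000\,\gamma^2$. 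Intersecting the two families leaves at least $\tfrac{9}{10}K$ blocks on which
\begin{equation*}
\bigl|\braket{\tilde Z_k(\beta_c),u}-\braket{\Sigma^{1/2}(\beta^*-\beta_c),u}\bigr|\leq r+\tfrac{1}{100}\norm{\beta_c-\beta^*}_\Sigma.
\end{equation*}

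For the existence statement I would take $u=\Sigma^{1/2}(\beta^*-\beta_c)/\norm{\beta_c-\beta^*}_\Sigma\in\mathcal{B}_2^d$, for which $\braket{\Sigma^{1/2}(\beta^*-\beta_c),u}=\norm{\beta_c-\beta^*}_\Sigma$ (using $\norm{\Sigma^{1/2}(\beta^*-\beta_c)}_2=\norm{\beta_c-\beta^*}_\Sigma$); this yields $\braket{\tilde Z_k(\beta_c),u}\geq \tfrac{99}{100}\norm{\beta_c-\beta^*}_\Sigma-r=\theta_1$ on each such block. For the uniform upper bound, Cauchy--Schwarz gives $\braket{\Sigma^{1/2}(\beta^*-\beta_c),u}\leq\norm{\beta_c-\beta^*}_\Sigma$ for every $u\in\mathcal{B}_2^d$, so the bound $r+\tfrac{101}{100}\norm{\beta_c-\beta^*}_\Sigma$ holds on the same blocks.

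The step requiring care — and the one I expect to be the main obstacle — is passing from the original $K$ blocks to the $K'$ pruned blocks, since pruning discards blocks by the criterion $\norm{\tilde Z_k}$ rather than by whether they are good for the chosen direction $u$. The point is simply a counting one: pruning removes at most $K-K'\leq K/10$ blocks, so at least $\tfrac{9}{10}K-(K-K')\geq\tfrac{8}{10}K\geq\tfrac{8}{10}K'$ of the good blocks survive, which gives both inequalities on at least $8/10$ of the pruned blocks. I would finally observe that the whole argument is oblivious to the outliers, since Lemmas~\ref{multiplier} and~\ref{produitscal} are already stated for the corrupted data $(\tilde X_i,\tilde Y_i)$.
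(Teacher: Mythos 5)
Your proof is correct and follows essentially the same route as the paper's: the same decomposition of $\tilde Z_k(\beta_c)$ into a multiplier part controlled on $\cE$ (Lemma~\ref{multiplier}) and a quadratic part controlled on $\cB$ (Lemma~\ref{produitscal}), the same extremal choice of $u$ and Cauchy--Schwarz for the uniform bound, and the same intersect-then-prune counting. If anything, your write-up is slightly more careful than the paper's: you make the $\tfrac{19}{20}+\tfrac{19}{20}-1=\tfrac{9}{10}$ intersection and the $\tfrac{9}{10}K-(K-K')\geq\tfrac{8}{10}K'$ pruning count explicit, and your choice $u=\Sigma^{1/2}(\beta^*-\beta_c)/\norm{\beta_c-\beta^*}_\Sigma$ has the correct sign, whereas the paper writes $\Sigma^{1/2}(\beta_c-\beta^*)$ (a harmless sign slip given its decomposition).
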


Now we give the main lemma from \cite{lei2019fast}, that states that it is possible to approximate  $\max_{u\in \mathcal{B}_2}\cQ^{8/10}( \braket{Z'_k, u} )$ with exponentially high probability in polynomial time.

\begin{Lemma}[Lemma 5.2 of \cite{lei2019fast}] \label{algo}
There a universal constant $C$ such that the following holds. Suppose there is $u \in \mathcal{B}_2$ so that, for at least 8/10 of the $k$

$$ \braket{ Z'_k, u} \geq \theta >0 $$
and that, for all $k$, $ Z_k'< R$. Then, when $T \geq 2 \log(K')R^2/ \theta^2$, with probability $\geq 1-\exp(-T/C)$,  algorithm \ref{algo:almost_final} applied with $T$ and $\theta$ outputs a vector $\tilde{u} \in \cB_2^d$ so that, for at least $2/10$ blocks ,  $\braket{\tilde{u},Z'_k} \geq \theta/10$ (and returns "fail" with probability $\exp(-T/C)$ ) . Moreover, each of the $T$ iteration of \ref{algo:almost_final} costs $K\times d+\polylog(d)$ operations \\

Remark : Algorithm  \ref{algo:almost_final} always return either a vector $u \in \mathcal{B}_2^d$ so that, for at least $2/10$ of the $i$, $\braket{u,Z'_k} \geq \theta/10$ or "fail". If there is no $u$ so that for at least $2/10$ blocks  $\braket{u,Z'_k} \geq \theta/10$, then it will always return "fail"
\end{Lemma}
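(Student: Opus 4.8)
The plan is to read Lemma~\ref{algo} as an instance of the multiplicative-weights / furthest-hyperplane machinery: the premise hands us a certificate direction $u$ separating an $8/10$ fraction of the pruned blocks at level $\theta$, and the goal is to show that Algorithm~\ref{algo:almost_final} must surface a comparably good direction, losing only constant factors in both the fraction ($8/10 \to 2/10$) and the margin ($\theta \to \theta/10$). First I would set up the algorithm as a weight-update scheme over the $K'$ blocks: maintain a distribution $w_t$ on $\{1,\dots,K'\}$, and at round $t$ compute a best-response unit vector $u_t$ as an (approximate) top eigenvector of the reweighted second-moment matrix $M_t = \sum_k w_t(k)\, Z'_k (Z'_k)^\top$, then reweight multiplicatively according to the captured mass $\braket{Z'_k, u_t}^2$, down-weighting blocks that are already well separated so the dynamics concentrate on the hard ones.

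The core is then a regret/potential estimate. The premise implies that any reweighting retaining enough mass on the $8/10$ good blocks forces $\sum_k w_t(k)\braket{Z'_k, u}^2 \gtrsim \theta^2$ for the certificate $u$, hence the top eigenvalue of $M_t$, and therefore the mass captured by the best response $u_t$, stays $\gtrsim \theta^2$ at every round. Since $\norm{Z'_k} < R$ bounds each per-block contribution by $R^2$, the relevant width parameter is $R^2/\theta^2$. The standard multiplicative-weights regret bound scales like $\log(K')/\eta + \eta T R^2$, and balancing the two terms at learning rate $\eta \sim \theta^2/R^2$ drives the regret below the per-round gain of order $\theta^2$ precisely once $T \gtrsim \log(K')\,R^2/\theta^2$, which is the iteration count in the statement. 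This certifies that an aggregated direction captures squared mass $\geq (\theta/10)^2$ on at least a $2/10$ fraction of blocks.

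Two bookkeeping steps finish the geometry. First, because the eigenvalue formulation only controls $\braket{Z'_k, u_t}^2$, I would recover the sign by testing both $\pm u_t$: by pigeonhole at least one orientation keeps a $2/10$ fraction with $\braket{Z'_k, u} \geq \theta/10$. Second, the one-sided-error guarantee of the Remark is enforced by an explicit verification step — compute all $K'$ inner products $\braket{u, Z'_k}$ (cost $O(K'd)$) and return $u$ only if at least $2/10$ of them exceed $\theta/10$, otherwise return ``fail''. This makes every returned vector provably valid, so the algorithm can err only by failing, never by outputting a bad direction, exactly as the Remark asserts.

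The runtime and the exponentially small failure probability both come from the best-response and rounding steps. The matrix $M_t$ is an implicit sum of $K'$ rank-one terms, so a matrix--vector product with it costs $O(K'd) = O(Kd)$, and an approximate top eigenvector (equivalently, a randomized rounding of the current fractional certificate via a Gaussian sketch, with the sketch accounting for the additive $\polylog(d)$) is obtained in $O(Kd + \polylog(d))$ operations. Each such randomized candidate passes verification with at least constant probability once the weights have converged, which happens by round $T \sim \log(K')R^2/\theta^2$; since the $T$ rounds draw independent randomness and the algorithm returns the first candidate that verifies, the all-fail probability is $(1-\Omega(1))^T = \exp(-T/C)$. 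The hard part, I expect, will be the regret analysis itself: pinning down the translation between the squared/eigenvalue formulation (needed to make the best response an efficient eigenvector computation) and the signed, quantile-based conclusion used by the descent, and in particular tracking the exact constants through the $8/10 \to 2/10$ and $\theta \to \theta/10$ degradations so that they line up with the advertised width $R^2/\theta^2$.
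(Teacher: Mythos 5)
First, a point of comparison that matters: the paper does not actually prove this statement. Lemma~\ref{algo} is imported wholesale as Lemma~5.2 of \cite{lei2019fast} (itself an adaptation of \cite{pmlr-v23-karnin12}), and the paper's only accompanying text is the Remark after the lemma, which sketches the failure-probability bookkeeping --- the approximate \texttt{PowerMethod} fails with constant probability per round, the analysis tolerates up to $0.1T$ such mistakes, and \texttt{ROUND} fails with probability $\exp(-cT)$ --- while deferring everything else to \cite{lei2019fast}. So your proposal cannot be matched against an in-paper argument; it is a reconstruction of the cited proof, and judged on those terms it has the right architecture: multiplicative weights over the $K'$ blocks with width $R^2$, per-round gain of order $\theta^2$ from the certificate direction, the balance $T \gtrsim \log(K')R^2/\theta^2$, Gaussian rounding of the iterates $u_t$, and an explicit verification step that makes the error one-sided exactly as the Remark requires (this last step is precisely the \texttt{Round} algorithm in the appendix).

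There are, however, two gaps in your sketch relative to the actual argument. The first is the one you partially flag yourself: you write that ``any reweighting retaining enough mass on the $8/10$ good blocks'' forces $\sum_k w_t(k)\braket{Z'_k,u}^2 \gtrsim \theta^2$, but you supply no mechanism guaranteeing that mass is retained. Without one, the claim fails: the multiplicative update actively drives weight \emph{away} from blocks with large $\braket{Z'_k,u_t}^2$, and nothing in a plain MWU scheme prevents the distribution from collapsing onto the $2/10$ bad blocks, at which point $\lambda_{\max}(M_t)$ need not be $\gtrsim \theta^2$ and the potential argument dies. This is exactly the role of the Bregman projection step in Algorithm~\ref{algo:almost_final} (line ``Compute the Bregmann projection''): the weights are projected back onto a capped simplex of spread distributions, which forces a constant fraction of mass to remain on the good $8/10$ fraction at every round, and the regret analysis must be carried out with the projection (which only helps, by the generalized-Pythagoras inequality in relative entropy). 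Your sketch never uses the projection, so as written it does not close. The second gap is in the failure-probability accounting: you attribute $\exp(-T/C)$ entirely to the independent rounding trials, whereas in the real analysis the same exponential must also absorb the constant-probability failures of the approximate eigenvector computations --- the argument works because more than $0.1T$ power-method failures among $T$ rounds occurs with probability exponentially small in $T$, and the regret bound is robust to $\leq 0.1T$ corrupted rounds. Treating the best response as exact, as you do, silently skips this robustness step, which is one of the two places the constant $C$ comes from.
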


\vspace{0.7cm}
 \begin{algorithm}[H]\label{algo:almost_final}
\SetKwInOut{Input}{input}\SetKwInOut{Output}{output}\SetKw{Or}{or}
\SetKw{Return}{Return}
\Input{$ Z'_1, \ldots,Z'_K$, $\theta$ and $T$. }
\Output{A good descent direction or "Fail".}  
\BlankLine

$R= \max(||Z'_i||)$\\
Initialize weights $\omega_1 = (1, ...,1 )/K$\\
\For{ $t =1, ...,  T $}{
 Let $A_t$ be the $K \times d$ matrix whose $i^{th}$ row is $\sqrt{\omega_t( i )} Z_i' $ and $u_t$ be the approximate top right singular vector of $A_t \times \Sigma^{-1/2}$, computed with a  $\texttt{PowerMethod}$. \\
 Set $\sigma_i = \braket{Z_i', u_t} ^2$. \\
$\omega_{t+1}(i)=\omega_t(i)\times (1-\sigma_i/2)$\\
Normalize $a=\sum_i \omega_{t+1}(i)$, $\omega_{t+1}=\omega_{t+1}/a$\\
Compute the Bregmann projection $\omega_{t+1}=\texttt{Bregmann}(\omega_{t+1})$\\
}
\Return $\texttt{ROUND}(Z',\theta,(u_t)_t)$.
 \caption{\texttt{BregmanRegression}}
\end{algorithm}
\vspace{0.7cm}

\begin{rem}
Lemma \ref{algo} has a failure probability even if we are on the events $\cA$, $\cB$ and $\cE$: it is because Algorithm \ref{algo:almost_final} calls two random algorithms, $\texttt{PowerMethod}$, which fails with constant probability, and $\texttt{ROUND}$, which fails with exponentially low probability $\propto \exp(-c T)$ with $c$ a constant (\cite{pmlr-v23-karnin12, lei2019fast}). Algorithm \ref{algo:almost_final} can tolerate at most $0.1 T$ among $T$ mistakes in the computation of the top eigenvectors of the matrices $A_t$, and the event where more than $0.1T$ of the power methods fail happesn with probability exponentially low in $T$. The failure probability of algorithm \ref{algo:almost_final} and the algorithm itself are explained in depth in  \cite{lei2019fast}.

The computation of the Bregman projection is described in \cite{doi:10.1137/1.9781611973068.129}, and appears of course in \cite{lei2019fast}.
\end{rem}

This last lemma states that finding a direction "aligned" with most of the $Z_k'$ grants a good descent direction.

\begin{Lemma} \label{goodu}
 If for at least $2/10$ blocks ,  $\braket{u, Z'_k} \geq  \theta/10 $, then $v=\Sigma^{-1/2}u$ checks $\braket{v, \Sigma \beta_c-\beta^*} \geq \theta/10 -r -||\beta_c-\beta^*||_\Sigma/100$ (and of course $||v||_\Sigma=1$ ). 
\end{Lemma}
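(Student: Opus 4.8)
The plan is to expand $\braket{u,Z'_k}$ along the same multiplier/quadratic splitting used to decompose $Z_k(\beta_c)$, to control each of the two pieces on a large fraction of the blocks via the events $\cE$ and $\cB$, and then to run a counting argument producing a single block on which the hypothesis and both controls hold at once. First I would set $v=\Sigma^{-1/2}u$ and record that, since $u$ is a unit vector, $\norm{v}_\Sigma^2=\braket{\Sigma^{-1/2}u,\Sigma\Sigma^{-1/2}u}=\norm{u}_2^2=1$, which already gives $\norm{v}_\Sigma=1$. Then for every pruned block I write
\begin{equation*}
\braket{u,Z'_k}=M_k+Q_k,\qquad M_k=\frac1m\sum_{i\in B_k}\xi_i\braket{v,\tilde X_i},\quad Q_k=\frac1m\sum_{i\in B_k}\braket{\beta^*-\beta_c,\tilde X_i}\braket{v,\tilde X_i},
\end{equation*}
where $\xi_i=\tilde Y_i-\braket{\beta^*,\tilde X_i}$; this is just the termwise identity $\braket{u,\Sigma^{-1/2}\tilde X_i}=\braket{v,\tilde X_i}$.

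Next I would control the two pieces separately. On $\cE$ the multiplier bound (Lemma~\ref{multiplier}), applied to $v$ with $\norm{v}_\Sigma=1$, gives $|M_k|\le r\norm{v}_\Sigma=r$ on at least $19K/20$ blocks. On $\cB$ the quadratic bound (Lemma~\ref{produitscal}), applied to the pair $(\beta^*-\beta_c,\,v)$, gives on at least $19K/20$ blocks
\begin{equation*}
\bigl|Q_k-\braket{v,\Sigma(\beta^*-\beta_c)}\bigr|\le 6\gamma\sqrt{\tfrac1m}\,\norm{\beta_c-\beta^*}_\Sigma\norm{v}_\Sigma\le\tfrac1{100}\norm{\beta_c-\beta^*}_\Sigma,
\end{equation*}
the last step using $m\ge 360\,000\,\gamma^2$, so that $6\gamma\sqrt{1/m}\le 1/100$.

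The crux is the bookkeeping that forces these sets to overlap. The hypothesis supplies at least $2K'/10$ pruned blocks with $\braket{u,Z'_k}\ge\theta/10$; discarding the at most $K/20$ blocks violating the multiplier bound and the at most $K/20$ violating the quadratic bound removes at most $K/10$ blocks, and since $K'=\lfloor 9K/10\rfloor$ we have $2K'/10-K/10>0$, so at least one block $k$ survives. On such a block, chaining the three estimates gives
\begin{equation*}
\frac{\theta}{10}\le\braket{u,Z'_k}=M_k+Q_k\le r+\braket{v,\Sigma(\beta^*-\beta_c)}+\frac1{100}\norm{\beta_c-\beta^*}_\Sigma ,
\end{equation*}
and rearranging yields $\braket{v,\Sigma(\beta^*-\beta_c)}\ge\theta/10-r-\norm{\beta_c-\beta^*}_\Sigma/100$, the asserted bound (with the convention that the displayed inner product measures the alignment of $v$ with the error direction $\beta^*-\beta_c$). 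The only genuinely delicate point is this counting step: the hypothesis fraction is taken over the $K'$ pruned blocks whereas the two process controls are stated over all $K$ blocks, so one must check that a fraction $2/10$ of the $K'\approx 9K/10$ kept blocks still exceeds the $2\times K/20$ blocks lost to the two bad events — which it does with room to spare. Everything else is the routine termwise identity $\braket{u,\Sigma^{-1/2}\tilde X_i}=\braket{v,\tilde X_i}$ and the numerical substitution $m\ge 360\,000\,\gamma^2$.
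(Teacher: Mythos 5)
Your proof is correct and takes essentially the same route as the paper's: the same multiplier/quadratic decomposition of $\braket{u,Z'_k}$, controlled uniformly over directions by the events $\cE$ and $\cB$, followed by the same pigeonhole overlap argument to produce a single block satisfying both the hypothesis lower bound and the chained upper bound $r+\braket{\beta^*-\beta_c,\Sigma v}+\norm{\beta_c-\beta^*}_\Sigma/100$. The only difference is bookkeeping: the paper argues via fractions of pruned blocks ($2/10+17/19>1$), while you count at most $K/10$ bad blocks against $2K'/10$ hypothesis blocks — equivalent, and your explicit handling of the pruned-versus-all-blocks fractions is if anything slightly more careful.
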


\begin{proof}[Proof of Theorem \ref{theo:main}]

We now have all the right tools to perform our analysis.
\begin{itemize}
    \item Whenever $ ||\beta_c-\beta^*||_{\Sigma} \geq 100 r $, then by Lemma \ref{theta}, there exists $u$ so that for at least $8/10 K'$ of the (pruned) blocks $  \braket{Z'_i, u} \geq 98/100 ||\beta_c-\beta^*||_{\Sigma} $. So algorithm \ref{algo:almost_final} with $\theta \in  [49/100\norm{\beta_c-\beta^*}_\Sigma, 98/100\norm{\beta_c-\beta^*}_\Sigma]$, and with $T \geq 6 \log(K') K \geq 6 \log(K') d  \geq 2 \log(K')R^2/ \theta^2 $ does not output "Fail" (Lemma \ref{algo}).
    
    We also recall that if there is no $u$ so that for at least $4/10$ blocks ,  $\braket{u,Z_i} \geq \theta/10$, then it will always return "Fail". Thus whenever $\theta \geq 10( 101/100\norm{\beta_c-\beta^*}+r) $, by Lemma \ref{theta} , the algorithm returns "Fail".
    
    So our binary search $\texttt{stepSize} $ returns a $\theta \in [49/100 ||\beta_c-\beta^*||_\Sigma \ , \  10( 102/100\norm{\beta_c-\beta^*}_\Sigma)] \times 2/100 \times (1/10)\times (100/102)$, in less than $\log(R/||\beta_c-\beta^*||_\Sigma) \lesssim \log(d)$ iterations. The vector $u$ returned by $\texttt{descentDirection} $ is so that $v=\Sigma^{-1/2}u$ checks $\braket{v, \Sigma \beta_c-\beta^*} \geq 2 ||\beta_c-\beta^*||_\Sigma/100$, with high probability (Lemma \ref{goodu}).
    
    So we have, if $c_1=49/100\times 1/10\times 2/100\times 100/102$  and $c_0=2/100$ $$\norm{\beta_{c+1}-\beta^*}_\Sigma \leq (1-2 c_0 c_1+c_1^2) \norm{\beta_c-\beta^*}_\Sigma \leq (1-2/100.000) \norm{\beta_c-\beta^*}_\Sigma $$

    \item Whenever $ ||\beta_c-\beta^*||_{\Sigma} \leq 100 r $, whenever $\theta \geq 10( 101/100\norm{\beta_c-\beta^*}+r) $, by Lemma \ref{theta} , the algorithm returns "Fail", so our binary search $\texttt{stepSize} $ returns a $\theta\leq  \  10( 102/100\norm{\beta_c-\beta^*}_\Sigma)\times 2/100 \times (1/10)\times (100/102)= 2/100 \norm{\beta_c-\beta^*}_\Sigma)$. We have
    $$\norm{\beta_{c+1}-\beta^*}_\Sigma \leq 102/100 \norm{\beta_c-\beta^*}_\Sigma \leq 102 r$$
\end{itemize}

\end{proof}

Once again, we recall that there is no effort made here to optimize the constants. 

\section{Experiments}
\label{sec:expe}
In this section, we present the results of some synthetic numerical experiments. Our first aim is to show that our algorithm comes with actual code and that it can be computed efficiently. This is a important feature of our approach that we want to put forward because, even though there are polynomial time algorithms (even linear time ones for the problem of mean estimation) they usually do not come with efficient code. Our second aim is to show the robustness (to heavy-tailed and outliers) properties of our algorithms as predicted by our theoretical findings in Theorem~\ref{theo:main}.
\subsection{Experiments with heavy-tailed data and outliers}
\textbf{Data generating process}. We fix the contamination level $\epsilon =|\cO|/N$. Then, we generate $(1-\epsilon)N$
"clean" input vectors $X_i$ following a multivariate Student's standard t-distribution with parameter $3$ and we generate the corresponding "clean" responses following the linear model $Y=\braket{\beta^*,X} + \sigma \xi $ where $\beta^*=[1,\ldots, 1]\in\bR^d$ and where $\xi$ also follows Student's t-distribution and is independent from the feature vector $X$, and $\sigma$ is the inverse signal to noise ration (SNR). We simulate an outliers attack by adding on the $\eps N$ remaining data an arbitrary large number ($10^9$) to some cordinates of the input vectors, or multiplying them by $10^9$. We also set some responses to $0$ and some other to $10^9$. The total number of samples is set to be $N=50 d$. We note that the sample size we choose increases with the dimension. We conduct $50$ independent simulations. 

\textbf{Metric}. We measure the parameter error in $L_2$ norm, which is also the estimation norm $\norm{.}_\Sigma$ as we take $\Sigma=Id$.

\textbf{Baselines}. As our baselines, we use the Ordinary Least Square, the Huber-loss M-estimator,
RANdom SAmple Consensus (RANSAC) and the MOM-estimator from \cite{JMLR:v17:14-273}, that we name \emph{metric MOM}. The first three are implemented in the python library sci-kit learn, and we coded the last one.

\begin{figure}[h]
    \centering
    \begin{subfigure}[b]{0.50\textwidth}
        \centering
        \includegraphics[width=\textwidth]{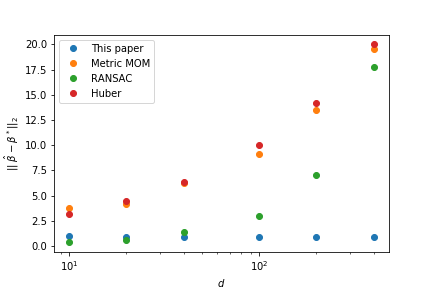}
        \caption{ Mean parameter error vs $d$ }
    \end{subfigure}%
    \begin{subfigure}[b]{0.50\textwidth}
        \centering
         \includegraphics[width=\textwidth]{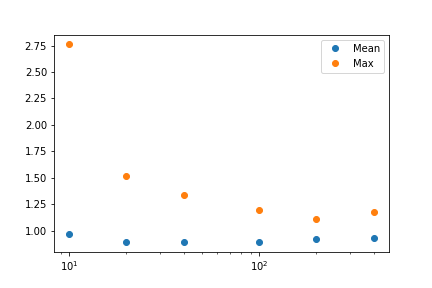} 
        \caption{ Mean and max parameter error vs  $d$}
    \end{subfigure}
    \begin{subfigure}[b]{0.50\textwidth}
        \centering
         \includegraphics[width=\textwidth]{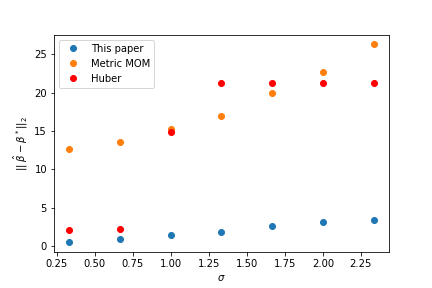} 
        \caption{ Mean parameter error vs  $\sigma$}
    \end{subfigure}
    \begin{subfigure}[b]{0.49\textwidth}
        \centering
         \includegraphics[width=\textwidth]{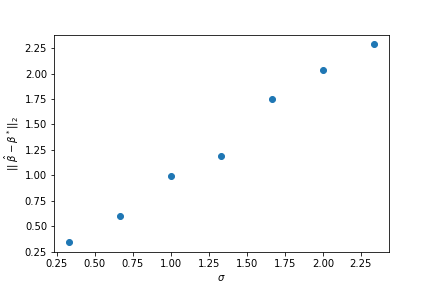} 
        \caption{ Mean parameter error vs  $\sigma$}
    \end{subfigure}

    \caption{Parameter error variations}
    \label{fig:perror}
\end{figure}

\textbf{Results}. We summarize our main findings here.
\begin{itemize}
    \item Error vs dimension $d$: We fix $\epsilon=0.005$, and we choose, for both our algorithm and the one from \cite{JMLR:v17:14-273} to take $K=d$. We do not include the OLS in our graphic because its very poor performance (due to the presence of contamination) would prevent us to compare the four others. We notice that for all the algorithms but the one presented in this paper, the prediction error grows quickly with the dimension. On the opposite, for our algorithm, the performance does not depend on the dimension. This does not come as a surprise, as the error is $\propto \sigma  K/N$, which we chose to be $d/N$, which is a fixed quantity in this setup. (Figure \ref{fig:perror}(a)). In Figure \ref{fig:perror}(b) we see a comparison between the maximum error over the 50 simulations and the mean error. We note that the maximum decreases with $d$ which seems to match the theory: since our bounds are true with probability $1-\exp(-K/c)$ (which is here equal to $1-\exp(-d/c)$), the are more frequently true as $d$ grows.

\item Error vs the inverse SNR $\sigma$: We fix $\epsilon=0.005$, $d=200$ , we still choose $K=d$ and we study how the algorithms perform for a range of SNR $\sigma$. We do not include OLS and we do not include RANSAC, because its error explodes for large $\sigma$ . We notice that our algorithm's error depends linearly on $\sigma$, which is not a surprise.

\end{itemize}

\subsection{Which choice of $K$ ?}

From a theoretical point of view, we answered the question of how one should choose the parameter $K$ in the previous section: $K$ should me at least $K\geq C_1(d \vee |\cO| \vee \log(1/\delta))$ for our algorithm to work with probability $\geq 1-\delta$, but it should not be too high because we do not want our bound $\propto K/N$ to explode. \\

\begin{figure}[h]
    \centering
    \begin{subfigure}[b]{0.5\textwidth}
        \centering
        \includegraphics[width=\textwidth]{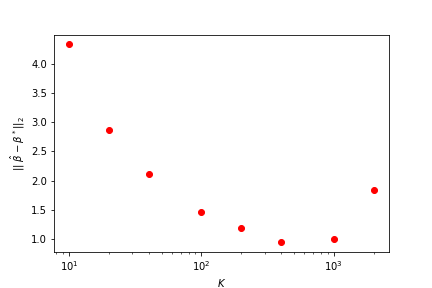}
        \caption{ Mean parameter error vs K (d=100)}
    \end{subfigure}%
    \begin{subfigure}[b]{0.5\textwidth}
        \centering
         \includegraphics[width=\textwidth]{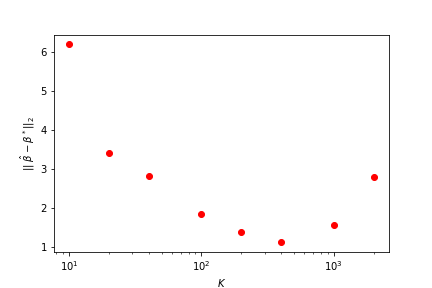} 
        \caption{ Maximum parameter error vs K (d=100)}
    \end{subfigure}
    \caption{Choice of $K$}
    \label{fig:Kchoix}
\end{figure}

 \textbf{Setup}. In Figure \ref{fig:Kchoix},  we fix the contamination level $\epsilon =|\cO|/N$ to be $0$ (there is no outlier). Then, we generate the covariates of dimension $d=100$ from a multivariate Student's t-distribution with parameter $3$ and we generate the corresponding clean responses using $y=\braket{\beta^*,x} +  \xi $ where $\beta^*=[1,\cdots, 1]$ and where $\xi$ follows Student's t-distribution and is independent from the covariates. The number of samples is set to $10 000$. We conduct $50$ independent simulations.

\textbf{Results}. The interesting thing is that we can recover a kind of trade-off from numerical experiment. It seems indeed that when $K \ll d$, our algorithm can not seize the complexity of the regression task, and that when $K \gg d$, there are not enough data per block and thus the block are "not informative enough". Those two opposite phenomenons lead to a sort of bias-variance trade-off.

\section{Conclusion}
We can outline the main benefits and limitations of our algorithm. On the practical side, the main benefit is its low computational complexity and that it comes with efficient actual code. On the theoretical side, the algorithm is robust to adversarial outliers and robust to heavy-tailed data and it achieves the subgaussian rate. It avoids the pitfall of SOS or SDPs since it uses spectral methods. This makes our algorithm both easy to understand easy to code, and that is the reason why this work comes with a simulation study unlike many other works in this literature. 

The main limitation for now is that we need to know the variance matrix $\Sigma$ of the co-variates (whereas sub optimal algorithms such as \cite{JMLR:v17:14-273} do not require knowledge of $\Sigma$). An other limitation of this work lies in the choice of $K$: we need prior knowledge on the number of outliers for our procedure to work. It might be possible to improve this with a Lepski-type procedure \cite{lerasle2019lecture}.

A final comment is that, while we choose the descent procedure from \cite{lei2019fast} for its simplicity and practical performances, the procedures from \cite{depersin2019robust} or from \cite{cherapanamjeri2019algorithms} applied with our $\tilde Z_k$'s would probably work just as well and give similar rates but may be harder to code efficiently in practice.

An interesting perspective would be to extend this work to other estimation problems such as covariance estimation, as presented in \cite{10.1145/3357713.3384329}. To do so, one would have to find an efficient way to compute $\sup_{u \in \cB_2} \sum_i \braket{u, A_i u}^2$ for any symmetric matrices $A_i$. While it is simple to compute  $\sup_{u \in \cB_2} \sum_i \braket{u, v_i}^2$ with the power method, this other problem seems harder.  We may also wonder if it is possible to adapt this kind of spectral procedure in order to recover sparse signals or, more generally, if it is possible to introduce any regularisation.  \\


\section{Proofs}

\subsection{Stochatic proofs }

We state a theorem and its direct corollary that will be useful to bound the different VC-dimensions at stake. 

 \begin{theo}[Warren, \cite{10.2307/1994937}] \label{polynomes}
 Let $P=\{P_1,...,P_m\}$ denote a set of polynomials of degree at most $\nu$ in $n$ real variables with $m > n$, then the number of sign assignments consistent for $P$ is at most $(4e\nu m/ n)^n$.
 \end{theo}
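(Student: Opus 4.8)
The plan is to count realizable sign vectors geometrically, as cells of the arrangement of zero sets $Z(P_i)=\{P_i=0\}$, and then to bound the number of cells by a B\'ezout-type argument applied to $n$-subsets of the polynomials; the binomial coefficient $\binom{m}{n}$ is precisely what manufactures the factor $1/n$ inside the base of the bound, and the constant $4e$ is what one recovers after tracking the genericity and charging overheads.

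First I would reduce the combinatorial count to a topological one. It suffices to treat strict sign vectors $\sigma\in\{+,-\}^m$ (these are the ones the VC-dimension application needs). For each consistent $\sigma$ the region $R_\sigma=\{x\in\R^n:\operatorname{sign}P_i(x)=\sigma_i\ \forall i\}$ is a nonempty open set, distinct sign vectors give disjoint regions, and all lie in the complement $U=\R^n\setminus\bigcup_{i=1}^m Z(P_i)$. Hence the number of consistent sign vectors is at most the number of connected components (cells) of $U$, and it suffices to bound the latter. Sign assignments permitting some $P_i=0$ live on lower-dimensional strata and are folded in by a dimension-reduction/perturbation argument at the cost of an absorbable constant factor.

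Second comes the core estimate on the number of cells. After a generic small perturbation of the coefficients (which cannot decrease the number of realizable sign vectors, so produces a valid upper bound) and a generic choice of a linear functional $\ell=\langle a,\cdot\rangle$, one may assume that each intersection stratum $M_S:=\bigcap_{i\in S}Z(P_i)$, for $S\subseteq\{1,\dots,m\}$ with $|S|\le n$, is empty or a smooth complete intersection of dimension $n-|S|$ meeting the others transversally. For each bounded cell $C$, the point of $\overline C$ maximizing $\ell$ lies on some stratum $M_S$ and is a critical point of $\ell|_{M_S}$ (the condition $a\in\operatorname{span}\{\nabla P_i:i\in S\}$). Near such a point the $|S|$ transversal sheets cut out at most $2^{|S|}$ local orthants, so each critical point is the $\ell$-maximizer of boundedly many cells. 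Unbounded cells are absorbed by intersecting with a large ball. Thus the number of cells is at most $C^n$ times the total number of critical points of $\ell$ on all strata $M_S$ with $|S|\le n$. For fixed $S$ of size $j$, those critical points solve the $j$ equations $P_i=0$ (degree $\nu$) together with the maximal-minor conditions expressing the gradient dependence; a B\'ezout estimate bounds their number by $(c\nu)^n$ for an absolute constant $c$. Summing over subsets gives
\[
\#\{\text{cells of }U\}\ \le\ C^n\sum_{j=0}^{n}\binom{m}{j}(c\nu)^n\ \le\ \Big(\tfrac{4e\nu m}{n}\Big)^n,
\]
where I used $\binom{m}{n}\le(em/n)^n$, the hypothesis $m>n$, and absorbed $C,c$ and the charging/unbounded-cell overheads into the final constant $4$.

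The step I expect to be the genuine obstacle is the middle one: justifying that every cell can be charged to a bounded number of critical points on the strata. The tempting shortcut---combining the $P_i$ into the single product $Q=\prod_i P_i$ of degree $\le\nu m$ and invoking the Milnor--Thom / Ole\u{\i}nik--Petrovski\u{\i} bound on the Betti numbers of a real variety---is easy but lossy, yielding a base of order $\nu m$ rather than $\nu m/n$ and so missing the target by a factor $n^n$. Recovering the sharp $1/n$ forces one to keep the polynomials separate and count critical points stratum by stratum, where the per-stratum degree is $\nu$ and the combinatorics $\binom{m}{j}$ supplies the improvement. The delicate points are (i) the transversality/general-position reductions together with the assertion that perturbation does not destroy realizable sign patterns, (ii) the per-stratum B\'ezout count for critical points of a generic linear functional on a complete intersection, and (iii) the bookkeeping for unbounded cells and for the non-strict ($P_i=0$) sign conditions. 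These are exactly the places where one invokes, rather than re-derives, the quantitative real-algebraic-geometry input.
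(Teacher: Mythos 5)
The paper does not actually prove this statement: it is imported verbatim as Warren's theorem, with the citation to \cite{10.2307/1994937} standing in for the proof, so there is no internal argument to compare yours against. Your outline is, in substance, Warren's original route: reduce the count of realizable strict sign vectors to the number of connected components of $\R^n\setminus\bigcup_i Z(P_i)$ after a sign-pattern-preserving perturbation to general position, charge each cell to a critical point of a generic linear functional on an intersection stratum $M_S$, bound the critical points per stratum by a B\'ezout/Milnor-type count of order $(c\nu)^n$, and sum over strata using $\sum_{j\le n}\binom{m}{j}\le(em/n)^n$ for $m>n$. Your diagnosis of where the $m/n$ gain comes from is exactly right: collapsing the arrangement into the single polynomial $Q=\prod_i P_i$ of degree $\nu m$ and invoking Milnor--Thom is the tempting shortcut and loses a factor $n^n$; the binomial combinatorics over strata is what recovers it.

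Two caveats keep this from being a proof rather than a faithful reconstruction of the architecture. First, as you yourself concede, the two load-bearing steps --- the charging lemma (every cell accounted for by boundedly many critical points, including the unbounded cells and the non-strict sign conditions) and the per-stratum critical-point count --- are invoked, not derived; these are precisely the content of Warren's paper. Second, your constant bookkeeping does not actually deliver the stated base $4e$: the local-orthant charge is $2^{|S|}$, hence up to $2^n$, and together with the unspecified B\'ezout constant your display only yields $\bigl(Ce\nu m/n\bigr)^n$ for some absolute $C$, with no verification that $C\le 4$; Warren obtains the $4$ by a more careful component count. For this paper's purposes the weakening is immaterial: Corollary \ref{polyVC} and all downstream uses only require $\VC(\cF)\le cd$ for an absolute constant $c$, and a base $Ce\nu m/n$ still gives $\VC\le 2n\log_2(Ce\nu)$ by the same shattering computation. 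But if you claim Warren's theorem with its stated constant, you must either cite it as the paper does or carry out his count in full.
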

 
  We denote by $\R^n_\nu[X]$ the set of polynomials of dergree at most $\nu$ in $n$ real variables.
 \begin{coro} \label{polyVC} 
Assume that the set of functions $\cF$ can be written $\cF=\{P \in \R^n_\nu[X]\rightarrow 1_{P(x)\geq 0}, x \in \R^n\}$, then $\VC(\cF) \leq 2 n \log_2(4 e \nu)$.
 \end{coro}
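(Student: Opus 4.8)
The plan is to read $\VC(\cF)$ as the largest integer $D$ for which there exist polynomials $P_1,\dots,P_D\in\R^n_\nu[X]$ that are shattered by the family $\{f_x : x\in\R^n\}$, where $f_x(P)=\mathbf 1_{P(x)\ge 0}$. Shattering means that the map $x\mapsto(\mathbf 1_{P_1(x)\ge 0},\dots,\mathbf 1_{P_D(x)\ge 0})$ realizes all $2^D$ binary vectors, so the heart of the matter is to bound the number of realizable vectors by the sign-counting estimate of Theorem~\ref{polynomes}. I would first dispose of the trivial regime: if $D\le n$ then, since $\nu\ge 1$ gives $\log_2(4e\nu)\ge\log_2(4e)>1$, we already have $D\le n\le 2n\log_2(4e\nu)$. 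Hence I assume from now on $D>n$, which is exactly the hypothesis $m>n$ needed to invoke Theorem~\ref{polynomes} with $m=D$.

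The first genuine step, and the one requiring care, is that Theorem~\ref{polynomes} counts \emph{strict} sign assignments (those with $P_i(x)\ne 0$), whereas the classifier uses the closed condition $P_i(x)\ge 0$; these two counts genuinely differ because of the zero sets. I would bridge this gap by perturbing the polynomials rather than the points. For each $b\in\{0,1\}^D$ fix a witness $x_b$ with $\mathbf 1_{P_i(x_b)\ge 0}=b_i$, and set
$$\eps_0=\min\bigl\{|P_i(x_b)| : b\in\{0,1\}^D,\ i\in\{1,\dots,D\},\ P_i(x_b)<0\bigr\}>0,$$
a minimum over a finite set that is nonempty because the all-zero label forces $P_i(x_{\mathbf 0})<0$ for every $i$. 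Choosing any $0<\eps<\eps_0$ and putting $Q_i=P_i+\eps\in\R^n_\nu[X]$, one checks that $P_i(x_b)\ge 0$ forces $Q_i(x_b)\ge\eps>0$ while $P_i(x_b)<0$ forces $Q_i(x_b)<0$; thus the strict sign vector of $(Q_1,\dots,Q_D)$ at $x_b$ equals $b$ under the identification $1\leftrightarrow +,\ 0\leftrightarrow -$. Consequently all $2^D$ strict sign assignments of $(Q_1,\dots,Q_D)$ are consistent, and Theorem~\ref{polynomes} (with $m=D>n$) yields
$$2^D\ \le\ \Bigl(\tfrac{4e\nu D}{n}\Bigr)^{n}.$$

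It then remains to solve this transcendental inequality. Writing $t=D/n\ge 1$ and $a=\log_2(4e\nu)\ge\log_2(4e)>3$, taking $\log_2$ gives $t\le a+\log_2 t$. If $t<2$ then $D<2n\le 2na$ and we are done, so suppose $t\ge 2$. The function $h(t)=t-\log_2 t$ is strictly increasing on $[2,\infty)$ since $h'(t)=1-1/(t\ln 2)>0$ there, and the hypothesis reads $h(t)\le a$. Because
$$h(2a)=2a-1-\log_2 a=a+(a-1-\log_2 a)\ \ge\ a,$$
the last inequality holding since $a-1-\log_2 a$ is increasing in $a$ and already nonnegative at $a=\log_2(4e)$, monotonicity of $h$ forces $t\le 2a$. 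Therefore $D=nt\le 2na=2n\log_2(4e\nu)$, as claimed.

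The main obstacle is the second step: naively equating the closed conditions $\{P_i(x)\ge 0\}$ with Warren's strict sign assignments is false (for instance $P(x)=-x^2$ is shattered in dimension one yet admits a single strict sign), so the constant-preserving reduction hinges on the additive perturbation $P_i\mapsto P_i+\eps$, which at once keeps the degree below $\nu$ and turns every closed half-space condition into a strict one uniformly over the finitely many witnesses. Once that is in place, the rest is the elementary estimate on $2^D\le(4e\nu D/n)^n$, whose only subtlety is absorbing the $\log_2(D/n)$ term, handled above through the monotonicity of $h$.
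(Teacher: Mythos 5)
Your proof is correct and follows the only natural route, which is the one the paper itself implicitly takes: it states Corollary~\ref{polyVC} as a ``direct'' consequence of Warren's theorem without writing out a proof, the intended argument being exactly yours --- a shattered set of $D$ polynomials realizes $2^D$ sign patterns, Warren gives $2^D\le(4e\nu D/n)^n$ for $D>n$, and solving for $D$ yields the stated bound. The points where you go beyond the paper's elliptic presentation --- the additive perturbation $P_i\mapsto P_i+\eps$ turning the closed conditions $P_i(x)\ge 0$ into strict sign assignments of polynomials of the same degree, the trivial case $D\le n$, and the monotonicity argument absorbing the $\log_2 t$ term in $t\le a+\log_2 t\Rightarrow t\le 2a$ --- are all sound, and the first of these fills a genuine (if standard) gap that the ``direct corollary'' framing glosses over.
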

Let us also recall that, if $g:\cY\to\cX$ is a function and $\cF \circ g =\{ f \circ g \ | \ f \in \cF \}$, then $\VC(\cF \circ g) \leq \VC(\cF)$.

\begin{proof}[Proof of Lemma \ref{multiplier}]

Let $\cF= \{(\textbf{x},\textbf{y}) \in \R^{(d+1)\times m} \rightarrow \mathbf{1}_{\braket{u , \sum_{i }(y_i-\braket{\beta^* , x_i})x_i}^2 \geq m^2 r^2} , u \in \Ba \}$. This is not a set of indicators of half-spaces, but $\cF$ is the composition of $g :(\textbf{x},\textbf{y}) \in R^{(d+1) \times m} \rightarrow (u \rightarrow \braket{u , \sum_{i }(y_i-\braket{\beta^* , x_i})x_i}^2-m^2 r^2 ) \in \R^d_2[X]  $ and of $\{P \in \R^d_2[X] \rightarrow 1_{P(u)\geq 0}, u \in \R^d \}$. By Corollary \ref{polyVC} , there exists an absolute constant $c$ such that $\VC(\cF) \leq c d$.

For all $u \in \Ba$,
 
 $$ \PP\left(\frac{1}{m}|\sum_{i \in B_1}(Y_i-\braket{\beta^* , X_i})\braket{u , X_i}| \geq r \right) \leq \frac{\E(\xi_1^2 \braket{u , X_1}^2)}{m r^2} \leq \frac{1}{32}. $$
 
 By Lemma \ref{main} applied with $\cF$, it follows that the following event $\cE$ has probability $\geq 1 - \exp(-K/512) $: for all $u \in \Ba$, there exist more than $3/4K$ blocks $k$ where
 
  $$  |\sum_{i \in B_k}(\tilde Y_i-\braket{a , \tilde X_i})\braket{u , \tilde X_i}| \leq  m r. $$

 \end{proof}
 
 \begin{proof}[Proof of Lemma \ref{produitscal}]
We note that, by bilinearity, it is enough to prove this result when $||u||_\Sigma=||v||_\Sigma=1.$

Let $\cG =\{(\textbf{x}_i)\in \R^{d \times m} \rightarrow \mathbf{1}_{|\sum \braket{x_i,u}\braket{x_i,v}-u\Sigma v|^2\geq c ||u||_{\Sigma}^2||v||_{\Sigma}^2} , u,v \in \R^d \}$. Once again, $\cG$ is a composition of $g :(\textbf{x},\textbf{y}) \in R^{(d+1) \times m} \rightarrow (u, v \rightarrow |\sum \braket{x_i,u}\braket{x_i,v}-u\Sigma v|^2- c ||u||_{\Sigma}^2||v||_{\Sigma}^2 ) \in \R^{2d}_4[X]  $ and of $\{P \in \R^{2d}_4[X] \rightarrow 1_{P(u)\geq 0}, u \in \R^d \}$, so there exists an absolute constant $c$ such that $\VC(\cG) \leq c d$ (Corollary \ref{polyVC}). 

Let $r_1=  6 \gamma \sqrt{\frac{1}{m}} \norm{u}_\Sigma \norm{v}_\Sigma$ . 

$$ \PP \left(|\frac{1}{m}\sum_{i \in B_1} \braket{u , X_i}\braket{v , X_i} - \braket{u , \Sigma v}| \geq  r_1 \right) \leq \frac{\E(\braket{u , X_1}^2\braket{v , X_1}^2)}{m r_1^2} \leq \frac{1}{32} $$

 because $ \E(\braket{u , X_1}^2\braket{v , X_1}^2) \leq \E(\braket{u , X_1}^4)^{1/2}\E(\braket{v , X_1}^4)^{1/2} \leq \gamma^2 \norm{u}_\Sigma^2\norm{v}_\Sigma^2$ (this is from the $L_2-L_4$ norm equivalence). We conclude with Lemma \ref{main}.

\end{proof}

\begin{proof}[Proof of Lemma \ref{init}]
We define $Z_k(\beta_c)=\sum_{j \in B_k}(Y_j-\beta X_j) \Sigma^{-1/2} X_j$.

We can write $ \norm{Z_k(\beta_c) }_2 \leq \norm{\frac{1}{m} \sum_{j \in B_k}(Y_j-\beta^* X_j) \Sigma^{-1/2} X_j }_2+ \norm{\frac{1}{m} \sum_{j \in B_k}((\beta^*-\beta_c) X_j) \Sigma^{-1/2} X_j }_2,$ we will bound those two quantities :

First $E((Y_j-\beta^* X_j) \Sigma^{-1/2} X_j)=0$, so, if $a=8\sqrt{\frac{\E(||(\xi_1 \Sigma^{-1/2} X_1||_2^2)}{m}} $
$$ \PP(||\frac{1}{m} \sum_{j \in B_1}(Y_j-\beta_c X_j) \Sigma^{-1/2} X_j ||_2\geq a )\leq \frac{\E(||(\xi_1 \Sigma^{-1/2} X_1||_2^2)}{m a^2}\leq \frac{1}{64}$$

Then, if we note $V_k=\braket{(\beta^*-\beta_c) X_j} \Sigma^{-1/2} X_j$ we notice that $\E(V_k)=\Sigma^{1/2}(\beta^*-\beta_c)$, and that 
$\E(||V_k||^2)\leq \E(\braket{\Sigma^{-1/2}X, \Sigma^{-1/2}X}^2)^{1/2} \ \E(\braket{\beta_c-\beta^*,X}^2)^{1/2}$. As $X$ checks the $L_4-L_2$ norm equivalence, $\Sigma^{-1/2}X$ checks the same equivalence, so $\E(\norm{\Sigma^{-1/2}X}_2^4)^{1/2}\leq \gamma \E(\norm{\Sigma^{-1/2}X}^2_2)= \gamma d$, and $\E(\braket{\beta_c-\beta^*,X}^2)^{1/2}=\norm{\beta_c-\beta^*}_\Sigma$, so $$\E(||\frac{1}{m}\sum_{i \in B_1} V_i||_2^2)= ||\E(V_1)||_2
^2+ \frac{1}{m}\E(|| V_i-\E(V_i)||_2^2)\leq ||\E(V_1)||_2
^2+ \frac{1}{m}\E(|| V_i||_2^2) \leq \norm{\beta_c-\beta^*}_\Sigma
^2+ \frac{1}{m} \gamma d \norm{\beta_c-\beta^*}_\Sigma^2 $$

So, as $m \geq 128 \gamma$, if $b = \sqrt{d}\norm{\beta_c-\beta^*}_\Sigma $
$$\PP(||\frac{1}{m}\sum_{i \in B_1} V_i||\geq b )\leq \frac{1}{64}$$

So the probability that one of the two bounds fails is $ \leq 1/32$. We then just use lemma \ref{main}, with the functions $\cF= \{(\textbf{x},\textbf{y})  \in \R^{(d+1)\times m} \rightarrow \mathbf{1}_{ ||\sum_{i }(y_i-\braket{\beta , x_i})x_i||^2 \geq d(r^2+\norm{\beta_c-\beta^*}_\Sigma^2) }  , \beta \in \R^d \}$. Again, we use Corollary \ref{polyVC} to state that there exists an absolute constant $c$ such that $\VC(\cG) \leq c d$.
\end{proof}

\subsection{Algorithmic proofs}

\begin{proof}[Proof of Lemma \ref{theta}]

In fact, we just know that, if we take $u=\frac{\Sigma^{1/2} (\beta_c-\beta^*)}{(||\beta_c-\beta^*||_\Sigma)}$, and  $v=\frac{ (\beta_c-\beta^*)}{(||\beta_c-\beta^*||_\Sigma)} \in B_\Sigma$ 
\begin{align}\braket{\tilde Z_i,  u} &=\sum_{i \in B_k}(\tilde Y_i-\braket{\beta^* , \tilde X_i})\braket{v , \tilde X_i}+ \sum_{i \in B_k}(\braket{\beta^*-\beta_c ,\tilde X_i})\braket{v ,\tilde X_i}
\end{align}

So for at least $9/10$ blocks, $\braket{\tilde Z_i,  u}\geq 99/100 ||\beta_c-\beta^*||_\Sigma-r := \theta_1 $. This is true for at least $9/10$ of the blocks $(\tilde Z_i)$, it is true for at least $17/19 >8/10$ of the "pruned blocks" $(Z'_i)$.\\

The same way, for any $u \in \cB_2$, we take $v = \Sigma^{-1/2} u  \in \cB_\Sigma$
\begin{align*}\braket{\tilde Z_i,  u} &=\sum_{i \in B_k}(\tilde Y_i-\braket{\beta^* , X_i})\braket{v , X_i}+ \sum_{i \in B_k}(\braket{\beta^*-\beta_c , X_i})\braket{v , X_i} \\
\leq & r +  \braket{ \beta^*-\beta_c , \Sigma v} + 1/100||\beta_c-\beta^*||_\Sigma \\
\leq & r+101/100 ||\beta_c-\beta^*||_\Sigma
\end{align*}
 for at least $9/10$ of the blocks. Again, as this is true for at least $9/10$ of the blocks, it is true for at least $17/19 >8/10$ of the "pruned blocks"

\end{proof}

\begin{proof}[Proof of Lemma \ref{goodu}]
\begin{align*}\braket{\tilde Z_i,  u} &=\sum_{i \in B_k}(\tilde Y_i-\braket{\beta^* , X_i})\braket{v , X_i}+ \sum_{i \in B_k}(\braket{\beta^*-\beta_c , X_i})\braket{v , X_i} \\
\leq & r +  \braket{ \beta^*-\beta_c , \Sigma v} + 1/100||\beta_c-\beta^*||_\Sigma
\end{align*}
for at least $9/10$ of the blocks $\tilde Z_i$.  Again, as this is true for at least $9/10$ of the blocks, it is true for at least $17/19 >8/10$ of the "pruned blocks" $Z_i'$.

Their is at least one block that checks both  $\braket{u, Z'_i} \geq  \theta/10 $ and $ \braket{u, Z'_i} \leq  r +  \braket{ \beta^*-\beta_c , \Sigma v} + 1/100||\beta_c-\beta^*||_\Sigma $  (as $2/10+17/19 >1$), so   $$\braket{ \beta^*-\beta_c , \Sigma v} \geq  \theta/10 -r -||\beta_c-\beta^*||_\Sigma/100 $$

\end{proof}

\textbf{Acknowlegements:} I would like to thank Guillaume Lecué and Matthieu Lerasle for their precious comments. I thank Yannick Guyonvarch and Fabien Perez for their help.

\begin{footnotesize}
\bibliographystyle{plain}
\bibliography{biblio}
\end{footnotesize}

\section{Appendix}
\vspace{0.7cm}
 \begin{algorithm}[H]\label{algo:Round}
\SetKwInOut{Input}{input}\SetKwInOut{Output}{output}\SetKw{Or}{or}
\SetKw{Return}{Return}
\Input{$\tilde Z_1, \ldots,\tilde Z_K$, $\theta$ and $u_1, ..., u_T$. }
\Output{u.}  
\BlankLine

\While{$\braket{Z_i,u} \leq \theta/10$ for more than $0.6K$ blocks }{
$g_j \sim \cN(0,1)$ for $j \in \{1, ..., T\}$ \\
$u=\sum_j g_j u_i/ ||\sum_j g_j u_i|| $\\
Report "Fail" and exit if more than $T$ trials have been performed }
\Return $u$.
 \caption{\texttt{Round}}
\end{algorithm}
\vspace{0.7cm}

\vspace{0.7cm}
 \begin{algorithm}[H]\label{algo:distance}
\SetKwInOut{Input}{input}\SetKwInOut{Output}{output}\SetKw{Or}{or}
\SetKw{Return}{Return}
\Input{$\tilde X_1, \tilde Y_1 \ldots, \tilde X_N, \tilde Y_N$, $\beta_c$, $K \geq |\cO|$, $T_{des}$ }
\Output{A good distance estimation, $d_t$}  
\BlankLine
Let, for $i\leq K$,  $\tilde Z_i = \frac{1}{m} \sum_{j \in B_i}(\tilde Y_j-\beta_c \tilde X_j)\Sigma^{-1/2} \tilde X_j  $\\
$Z'=\texttt{prune}(\tilde Z)$\\
$R= \max(\tilde Z'_i)$ \\
$d_{high}=R, \ d_{low}=0$\\
\For{$j \in \{1,2, ..., \lfloor\log(K)\rfloor\}$}{
$d_m=(d_{high}+d_{low})/2$\\
\If{$\texttt{BregmanRegression}( Z', d, \log( T_{des})+\log(K)K)$ returns "Fail"}{
$d_{high}\leftarrow d_m$\\
}
\Else{$d_{low}\leftarrow d_m$}
 }

\Return $d_{low}\times 2/100 \times (1/10)\times (100/102)$.
 \caption{\texttt{stepSize}}
\end{algorithm}
\vspace{0.7cm}

\vspace{0.7cm}
 \begin{algorithm}[H]\label{algo:gradient}
\SetKwInOut{Input}{input}\SetKwInOut{Output}{output}\SetKw{Or}{or}
\SetKw{Return}{Return}
\Input{$\tilde X_1, \tilde Y_1 \ldots, \tilde X_N, \tilde Y_N$, $\beta_c$, $K \geq |\cO|$, $T_{des}$, $\theta$. }
\Output{$u$.}  
\BlankLine
Let, for $i\leq K$,  $\tilde Z_i = \frac{1}{m} \sum_{j \in B_i}(\tilde Y_j-\beta_c \tilde X_j)\Sigma^{-1/2} \tilde X_j  $\\
$\texttt{prune}(\tilde Z)$\\
$u=\texttt{BregmanRegression}(\tilde Z, \theta \times 100/2 \times (10)\times (102/100), \log( T_{des})+\log(K)K)$\\
\Return $\Sigma^{-1/2} u$.
 \caption{\texttt{descentDirection}}
\end{algorithm}
\vspace{0.7cm}

\end{document}